\newcommand{\MNIST}[1]{\texttt{MNIST}$_{#1}$}
\newcommand{\Reuters}[1]{\texttt{Reuters}$_{#1}$}
\newcommand{\hyperprior}{\pi}
\newcommand{\hyperposterior}{\rho}
\newcommand{\prior}{P}
\newcommand{\posterior}{Q}
\newcommand{\KL}{\operatorname{KL}}
\newcommand{\kl}{\operatorname{kl}}
\newcommand{\bqmv}{B_\hyperposterior}
\newcommand{\Ind}[1]{\mathds{1}_{#1}}
\newcommand{\dmv}{d_{\Dcal}(\rho)}
\newcommand{\xbf}{{\bf x}}
\newcommand{\D}{{\cal D}}
\newcommand{\Dcal}{{\cal D}}
\newcommand{\Xcal}{{\cal X}}
\newcommand{\Ycal}{{\cal Y}}
\newcommand{\Hcal}{{\cal H}}  %Hypothesis Space
\newcommand{\Vcal}{{\cal V}}  %Views Space 
\newcommand{\concat}{\texttt{Concat}}
\newcommand{\mono}{\texttt{Mono}}
\newcommand{\mvmv}{\texttt{MV-MV}}
\newcommand{\Fusion}[2]{\texttt{Fusion}$_{\mathtt{#1}} \pwr {\mathtt{#2}}$}
\newcommand{\mvboost}{\texttt{MVBoost}}
\newcommand{\mvAdaboost}{\texttt{MV-AdaBoost}}
\newcommand{\rboost}{\texttt{rBoost.SH}}
\newcommand{\pbboost}{\texttt{PB-MVBoost}}
\DeclareMathOperator*{\Esp}{\mathbb{E}} 
\newcommand{\E}[1]{{\displaystyle \Esp_{#1}}}
\newcommand{\sign}{\operatorname{sign}}
\newcommand{\pwr}{^}
\DeclareMathOperator*{\argmin}{\mathrm{argmin}}
\newtheorem{cor}{Corollary}
\newtheorem{lemma}{Lemma}
\newtheorem{theorem}{Theorem}
\newtheorem{definition}{Definition}
\newcommand{\ver}{1}
\title{Multiview Boosting by Controlling the Diversity and the Accuracy of View-specific Voters} 
\author{Anil Goyal$^{1,2}$ \and Emilie Morvant$^1$ \and Pascal Germain$^3$ \and  Massih-Reza Amini$^2$ \and 
	$\mbox{}^1$ \small Univ Lyon, UJM-Saint-Etienne, CNRS, Institut d'Optique Graduate School, \\ \small  Laboratoire Hubert Curien UMR 5516, F-42023, Saint-Etienne, France
	\and
	$\mbox{}^2$ \small Univ. Grenoble Alps, Laboratoire d'Informatique de Grenoble, AMA, \\ \small Centre Equation 4, BP 53, F-38041 Grenoble Cedex 9, France
	\and
	$\mbox{}^3$ \small  Inria Lille - Nord Europe, Modal Project-Team, \\ \small 59650 Villeneuve d'Ascq, France
}
\begin{document}

\maketitle

\begin{abstract}
In this paper we propose a boosting based multiview learning algorithm, referred to as $\pbboost$, which iteratively learns \textit{i)}  weights over view-specific voters capturing view-specific information; and  \textit{ii)}  weights over views by optimizing a PAC-Bayes multiview C-Bound that takes into account the accuracy of view-specific classifiers and the diversity between the views.
%We tackle the issue of multiview learning where we aim at taking advantage of multiple representations or views of the data.
%We learn a multiview weighted majority vote by following a two-level hierarchical strategy with more than two views.
%For each view, we consider a set of view-specific voters and our objective is to learn weights over them and weights over the views leading to a well-performing majority vote.
%This is done in order to control the trade-off between the accuracy of view-specific classifiers and the diversity between the views, which is important for multiview learning.
We derive a generalization bound for this strategy following the PAC-Bayes theory which is a suitable tool to deal with models expressed as weighted combination over a set of voters.
Different experiments on three publicly available datasets show the efficiency of the proposed approach with respect to state-of-art models.
\end{abstract}

%\begin{keyword}
%Multiview Learning, PAC-Bayes, Boosting
%\end{keyword}

\section{Introduction}

%When one has to deal with data, we need a meaningful representation that is able to capture the information required for data processing.
%However, designing a good representation can be a difficult task.
%To tackle this issue, one solution consists in taking advantage of several representations (or views) of the data.
With the tremendous generation of data, there are more and more situations where observations are described by more than one view.
This is for example the case with multilingual documents that convey the same information in different languages or images that are naturally described according to different set of features (for example SIFT, HOG, CNN etc).
In this paper, we study the related machine learning problem that consists in finding an efficient classification model from different information sources that describe the observations.
This topic, called multiview learning~\cite{AtreyHEK10,sun2013survey}, has been expanding over the past decade, spurred by the seminal work of Blum and Mitchell on co-training~\cite{blum98} (with only two views).
The aim is to learn a classifier which performs better than classifiers trained over each view separately (called view-specific classifier). 
Usually, this is done by directly concatenating the 
representations (early fusion) or by combining the predictions of view-specific classifiers (late fusion)~\cite{Early-Late-ACMMultimedia05}.
In this work, we stand in the latter situation.
Concretely, we study a two-level multiview learning strategy based on the PAC-Bayesian theory 
(introduced by ~\cite{McAllester99} for monoview learning). 
This theory provides Probably Approximately Correct (PAC) generalization guarantees for models expressed as a weighted combination over a set of functions/voters ({\it i.e.}, for a weighted majority vote). 
In this framework, given a \textit{prior} distribution over a set of functions (called voters) $\cal H$ and a learning sample, one aims at learning a \textit{posterior} distribution over $\cal H$ leading to a well-performing majority vote where each voter from $\cal H$ is weighted by its probability to appear according to the posterior distribution.
Note that, PAC-Bayesian studies have not only been conducted to characterize the error of such weighted majority votes~\cite{catoni2007pac,Seeger02,LangfordS02,GermainLLMR15}, but have also been used to derive theoretically grounded learning algorithms ({\it e.g.} for supervised learning~\cite{GermainLLM09,parrado-12,alquier-15,cqboost,MorvantHA14} or transfer learning~\cite{DALC}).
\begin{figure}[t!]
	\centering
	\includegraphics[scale=0.4]{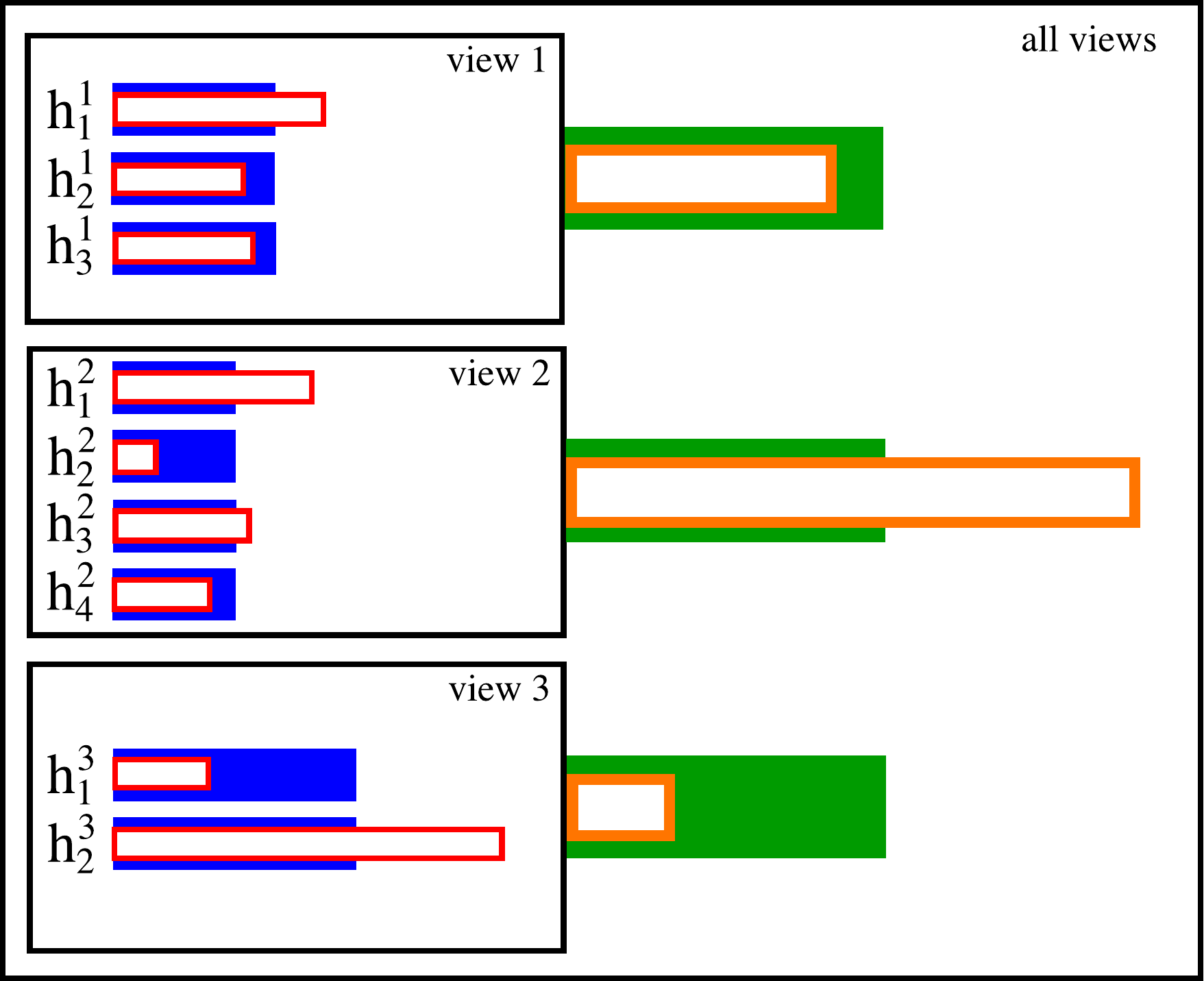}
	\caption{Example of the multiview distributions hierarchy with $3$ views. For all views $v\in\{1,2,3\}$, we have a set of voters  $\Hcal_v=\{h_1^v,\ldots,h_{n_v}^v\}$ on which we consider prior $\prior_v$ view-specific distribution (in blue), and  we consider a hyper-prior~$\hyperprior$ distribution (in green) over the set of $3$ views. The objective is to learn a posterior $\posterior_v$ (in red) view-specific distributions and a hyper-posterior $\hyperposterior$ distribution (in orange) leading to a good model. The length of a rectangle  represents the weight (or probability) assigned to a voter or a view.}
	\label{fig:MultiviewHierarchy}
\end{figure}
To tackle multiview learning in a PAC-Bayesian fashion, we propose to define a two-level hierarchy of  {prior} and {posterior} distributions over the views:
\textit{i)} for each view $v$, we consider a prior $P_v$ and a posterior $Q_v$  distributions over view-specific voters to capture view-specific information and \textit{ii)} a hyper-prior $\pi_v$ and  a hyper-posterior $\rho_v$  distributions over the set of views to capture the accuracy of view-specific classifiers and diversity between the views (see Figure~\ref{fig:MultiviewHierarchy}).
Following this distributions' hierarchy, we define a multiview majority vote classifier where view-specific classifiers are weighted according to posterior and hyper-posterior distributions.
By doing so, we extend the classical PAC-Bayesian theory to multiview learning with more than two views and derive a  PAC-Bayesian  generalization bound for our multiview majority vote classifier.

From a practical point of view, we design an algorithm based on the idea of boosting~\cite{Freund95,adaboost,Schapire99, Schapire2003}.
Our boosting-based multiview learning algorithm, called $\pbboost$, deals with the two-level hierarchical learning strategy.
$\pbboost$ is an ensemble method and outputs a multiview classifier that is a combination of view-specific voters. 
It is well known that controlling the diversity between the view-specific classifiers or the views is a key element in multiview learning \cite{Massih09, Goyal2017, chapelle2006semi, Kuncheva, Maillard09, MorvantHA14}.
Therefore, to learn the weights over the views, we minimize an upper-bound on the error of the majority vote, called the multiview C-bound~\cite{GermainLLMR15,cqboost,Goyal2017}, allowing us to control a trade-off between accuracy and diversity.
Concretely, at each iteration of our multiview algorithm, we learn \textit{i)} weights over view-specific voters based on their ability to deal with examples on the corresponding view (capturing view-specific informations); and \textit{ii)} weights over views by minimizing the multiview C-bound.
%Finally, the distribution over the learning examples is updated based on the learned weights over view-specific voters and views.
To show the potential of our algorithm, we empirically evaluate our approach on \MNIST{1}, \MNIST{2} and \Reuters{} RCV1/RCV2 collections\cite{Lecun98, Massih09}. 
We observe that our algorithm $\pbboost$, empirically minimizes the multiview C-Bound over iterations, and lead to good performances even when the classes are unbalanced. We compare $\pbboost$ with a previously developed multiview algorithm, denoted by \Fusion{all}{Cq} \cite{Goyal2017},  which first learns the view-specific voters at the base level of the hierarchy, and then, combines the predictions of view-specific voters using a PAC-Bayesian algorithm \texttt{CqBoost} \cite{cqboost}. From the experimental results, it came out that $\pbboost$ is more stable across different datasets and computationally faster than \Fusion{all}{Cq}.

In the next section, we discuss some related works. In Section~\ref{sec:MVPB-Bound}, we present the PAC-Bayesian  multiview learning framework~\cite{Goyal2017}. 
In Section~\ref{sec:pbboost}, we derive our multiview learning algorithm $\pbboost$. 
Before concluding in Section~\ref{sec:conclusion}, we experiment our algorithm in Section~\ref{sec:results}.

\section{Related Work}
\label{sec:related_work}

Learning a weighted majority vote is closely related to ensemble methods~\cite{Dietterich00,re2012ensemble}.
In the ensemble methods literature, it is well known that we desire to combine voters that make errors on different data points~\cite{Kuncheva}.
Intuitively, this means that the voters disagree on some data points.
This notion of disagreement (or agreement) is sometimes called diversity between classifiers~\cite{Pastor15,brown10,Kuncheva}.
Even if there is no consensus on the definition of ``diversity'', controlling it while keeping good accuracy is at the heart of a majority of ensemble methods: indeed if all the voters agree on all the points then there is no interest to combine them, only one will be sufficient.
Similarly, when we combine multiple views (or representations), it is known that controlling diversity between the views plays a vital role for learning the final majority vote~\cite{Massih09, Goyal2017, chapelle2006semi, Maillard09}.
Most of the existing ensemble-based multiview learning algorithms try to exploit either view consistency (agreement between views)~\cite{janodet:hal-00403242,Koco11,Xiao12} or diversity between views~\cite{emvboost,Goyal2017, Peng11,Peng17} in different manners. \cite{janodet:hal-00403242} proposed a boosting based multiview learning algorithm for 2 views, called 2-Boost.
At each iteration, the algorithm learns the weights over the view-specific voters by maintaining a single distribution over the learning examples.
Conversely, \cite{Koco11} proposed Mumbo that maintains separate distributions for each view.
For each view, the algorithm  reduces the weights associated with the examples hard to classify, and increases the weights of those examples in the other views.
This trick allows a communication between the views with the objective to maintain view consistency.
Compared to our approach, we follow a two-level learning strategy where we learn (hyper-)posterior distributions/weights over view-specific voters and views.
In order to take into account accuracy and diversity between the views, we optimize the multiview C-Bound (an upper-bound over the risk of multiview majority vote learned, see e.g. ~\cite{GermainLLMR15,cqboost,Goyal2017}).

\cite{emvboost} proposed EMV-AdaBoost, an embedded multiview Adaboost algorithm, restricted to two views.
At each iteration, an example contributes to the error if it is misclassified by any of the view-specific voters and the diversity between the views is captured by weighting the error by the agreement between the views.
\cite{Peng11,Peng17} proposed variants of Boost.SH (boosting with SHared weight distribution) which controls the diversity for more than two views. 
Similarly than our approach, they maintain a single global distribution over the learning examples for all the views.
To control the diversity between the views, at each iteration they update the distribution over the views by casting the algorithm in two ways: \textit{i)} a multiarmed bandit framework ($\texttt{rBoost.SH}$) and  \textit{ii)} an expert strategy framework ($\texttt{eBoost.SH}$) consisting of set of strategies (distribution over views) for weighing views. 
At the end, their multiview majority vote is a combination of $T$ weighted base voters, where $T$ is the number of iterations for boosting. 
Whereas, our multiview majority vote is a weighted combination of the view-specific voters over all the weighted views. 
%Moreover, we learn the weights over views  by optimizing mutliview C-Bound to take into account accuracy and diversity between views. 
%In Section~\ref{sec:results}, we empirically compare our algorithm to $\texttt{rBoost.SH}$ (which has similar performance as $\texttt{eBoost.SH}$): our approach is able to produce a better multiview majority vote. % and show that our algorithm performs better than $\texttt{rBoost.SH}$.

Furthermore, our approach encompasses the one of \cite{Massih09} and \cite{Xiao12}. 
\cite{Massih09} proposed a Rademacher analysis for expectation of individual risks of each view-specific classifier (for more than two views). 
\cite{Xiao12} derived a weighted majority voting Adaboost algorithm which learns weights over view-specific voters at each iteration of the algorithm.
Both of these approaches maintain a uniform distribution over the views whereas our algorithm learns the weights over the views such that they capture diversity between the views.
Moreover, it is important to note that \cite{SunS14} proposed a PAC-Bayesian analysis for multiview learning over the concatenation of views but limited to two views and to a particular kind of voters: linear classifiers.
This has allowed them to derive a SVM-like learning algorithm but dedicated to multiview with exactly two views.
In our work, we are interested in learning from more than two views and no restriction on the classifier type.
Contrary to them, we followed a two-level distributions' hierarchy where we learn weights over view-specific classifiers and weights over views.
%They derived a SVM-like learning algorithm based on this framework. 
%But, in our work, we are interested in a more general and natural case of multiview learning with more than two views.
%Moreover, compared to PAC-Bayesian analysis of Sun et al.~\cite{SunS14}, here we are interested in a more general and natural case of multiview learning with more than two views.\textcolor{red}{more things on that...}

\section[Multiview PB Framework]{The Multiview PAC-Bayesian Framework} 
\label{sec:MVPB-Bound}

%In this section, we recall our previous PAC-Bayesian theory for multiview learning with more than two views \cite{Goyal2017}. 
\subsection{Notations and Setting}
In this work, we tackle multiview binary classification tasks where the observations are described with $V\geq 2$ different representation spaces, {\it i.e.}, views; let $\Vcal$ be the set of these $V$ views.  
%We consider binary classification tasks where the multiview observations 
Formally, we focus on tasks for which the input space is  $\Xcal = \Xcal_1 \times \dots \times \Xcal_V$, where $\forall v\in\Vcal,\ X_v \subseteq \mathbb{R}^{d_v}$ is a \mbox{$d_v$-dimensional} input space, and the binary output space is $\Ycal =\{-1,+1\}$.
We assume that $\Dcal$ is an unknown distribution over $\Xcal\times \Ycal$.
We stand in the PAC-Bayesian supervised learning setting where an observation $\mathbf{x} = (x \pwr 1,x \pwr 2, \dots, x \pwr V)\in\Xcal$ is given with its label $y\in\Ycal$, and is independently and identically drawn ({\it i.i.d.}) from $\Dcal$.
%$\mathbf{x} = (x \pwr 1,x \pwr 2, \dots, x \pwr V)$ are described with a multiview input space $\Xcal = \Xcal_1 \times \dots \times \Xcal_V$, where $V \ge 2$ is the number of views and $X_v \subseteq \mathbb{R}^{d_v}$ is a \mbox{$d_v$-dimensional} input space.
%We assume  that examples $(\mathbf{x},y)$, with binary output space $y \in \Ycal = \{-1,1\}$, are drawn from an unknown distribution $\Dcal$ over $\Xcal \times \Ycal$.  
A learning algorithm is then provided with a training sample $S$ of $n$ examples {\it i.i.d.} from $\Dcal$: $S = \{(\mathbf{x}_i, y_i)\}_{i=1} \pwr n \sim (\Dcal)^n$, where $(\Dcal)^n$ stands for the distribution of a \mbox{$n$-sample}.
For each view $v \in \Vcal$, we consider a view-specific set $\Hcal_v$ of voters $h  :  \mathcal{X}_v \to {\cal Y}$, and a  prior distribution $\prior_v$ on $\Hcal_v$. Given a \emph{hyper-prior} distribution $\hyperprior$ over the views $\Vcal$, and a multiview learning sample $S$, our PAC-Bayesian learner \mbox{objective} is twofold: {\it i)} finding a posterior distribution $\posterior_v$ over $\mathcal{H}_v$  for all views $v \in \Vcal$; {\it ii)} finding a \emph{hyper-posterior} distribution $\hyperposterior$ on the set of the views $\Vcal$. 
This defines a hierarchy of distributions illustrated on Figure~\ref{fig:MultiviewHierarchy}. 
The learned distributions express a multiview weighted majority vote\footnote{In the PAC-Bayesian literature, the weighted majority vote is sometimes called the Bayes classifier.} defined as 
\begin{align}
\label{eq:mv_majority_vote}
B_{\hyperposterior}(\mathbf{x}) = \sign \left[\E{ v \sim \hyperposterior}\  \E{ h \sim \posterior_v} h(x^v) \right].
\end{align}
Thus, the learner aims at constructing the posterior and hyper-posterior distributions that minimize the true risk $R_{\mathcal{D}}(B_{\hyperposterior})$ of the multiview weighted majority vote
$$ R_{\mathcal{D}}(B_{\hyperposterior}) = \E{(\mathbf{x},y) \sim \mathcal{D}} \Ind{[B_{\hyperposterior}(\mathbf{x}) \neq y]},
$$ 
where $\Ind{[\pi]} = 1$ if the predicate $\pi$ is true and $0$ otherwise. 
The above risk of the deterministic weighted majority vote is closely related to the Gibbs risk $R_{\Dcal} (G_{\hyperposterior})$ defined as the expectation of the individual risks of each voter that appears in the majority vote. More formally, in our multiview setting, we have
\begin{align}
\nonumber 
 R_{\mathcal{D}}(G_{\hyperposterior})  \ =\  \E{(\xbf,y) \sim \D} \ \E{ v \sim \hyperposterior} \ \E{h \sim \posterior_v} \Ind{[h(x^v) \neq y]},
%\label{eq:SplitGibbsRisk}
% &= \ \tfrac{1}{2}\, \dmv + \emv\,,\\
%\nonumber\text{where}\ 
%\dmv \  &= \  \Esp_{\xbf \sim \mathcal{D}_{\mathcal{X}}} \Esp_ {v \sim \hyperposterior} \Esp_{v' \sim \hyperposterior}   \Esp_{h \sim \posterior_v}  \Esp_{h' \sim \posterior_{v'}}   \Ind{[ h(x^v) {\ne} h'(x^{v'})]}, \\
%\nonumber\text{and}\  \emv   \  &= \      \E{(\xbf,y) \sim \mathcal{D}} \Esp_ {v \sim \hyperposterior} \Esp_{v' \sim \hyperposterior}   \Esp_{h \sim \posterior_v}   \Esp_{h' \sim \posterior_{v'}}   \Ind{[  h(x^v) {\ne} y ]}  \Ind{[ h'(x^{v'}) {\ne} y ]}.
\end{align}
and its empirical counterpart is 
\begin{align*}
R_{S}(G_{\hyperposterior}) &= \frac{1}{n} \sum_{i=1}^n \E{ v \sim \hyperposterior} \  \E{ h \sim \posterior_v} \Ind{[h(x_i^v) \neq y_i]}.
%&= \frac{1}{2}\dmvs + \emvs\,,
\end{align*}
In fact, if $B_{\hyperposterior}$ misclassifies $\xbf \in \Xcal$, then at least half of the view-specific voters from all the views (according to  hyper-posterior and posterior distributions) makes an error on $\xbf$. Then, it is well known ({\it e.g.}, \cite{shawe2003pac,McAllester03,GermainLLMR15}) that $R_{\Dcal} (B_{\rho})$ is upper-bounded by twice $R_{\Dcal} (G_{\rho})$:
$$
R_{\mathcal{D}}(B_\rho)\leq 2R_{\mathcal{D}}(G_{\rho}).
$$
In consequence, a generalization bound for $R_{\mathcal{D}}(G_{\hyperposterior})$ gives rise to a generalization bound for $R_{\mathcal{D}}(B_{\hyperposterior})$.

There exist other tighter relations~\cite{LangfordS02,Lacasse06,GermainLLMR15}, such as the C-Bound~\cite{Lacasse06,GermainLLMR15} which captures a trade-off between the Gibbs risk $R_{\mathcal{D}}(G_{\rho})$ and the disagreement between pairs of voters. This latter can be seen as a measure of diversity among the voters involved in the majority vote~\cite{MinCQ,MorvantHA14}, that is a key element to control from a multiview point of view~\cite{Massih09,AtreyHEK10,Kuncheva,Maillard09,Goyal2017}. 
The C-Bound can be extended to our multiview setting as below. 
\begin{lemma}[Multiview C-Bound] \label{lem:mv-cbound}
	Let $V \geq 2$ be the number of views. 
	For all posterior $\{\posterior_v\}_{v=1}^V$ distributions over $\{\Hcal_v\}_{v=1}^V$ and hyper-posterior $\hyperposterior$ distribution over views $\mathcal{V}$, if $R_{\Dcal}(G_{\hyperposterior}) <\frac12$, then we have
\begin{eqnarray}
\label{eq:cbound_multiview}
R_{\mathcal{D}}(B_\hyperposterior)
&\leq &
1- \frac{\displaystyle  \big(1-2R_{\Dcal}(G_{\hyperposterior})\big)^2}{\displaystyle 1-2 \dmv} \label{eq:cbound_multiview_4}\\
&\leq & 
1- \frac{  \Big(1-2  \Esp_ {v \sim \hyperposterior} R_{\Dcal}(G_{Q_v})\Big)^2}{ 1-2 \Esp_ {v \sim \hyperposterior}  d_{\Dcal}(Q_v) } \label{eq:cbound_multiview_2}\,,
\end{eqnarray}
where $\dmv$  is the expected disagreement between pairs of voters defined as
\begin{align*}
\dmv \ & = \  \Esp_{\xbf \sim \mathcal{D}_{\mathcal{X}}} \Esp_ {v \sim \hyperposterior} \Esp_{v' \sim \hyperposterior}   \Esp_{h \sim \posterior_v}  \Esp_{h' \sim \posterior_{v'}}   \Ind{[ h(x^v) {\ne} h'(x^{v'})]},
\end{align*}
and $R_\Dcal (G_{Q_v})$ and  $d_{\Dcal}(Q_v)$  are respectively the true view-specific Gibbs risk and the expected disagreement defined as
\begin{align*}
R_{\mathcal{D}}(G_{Q_v})  &=  \E{(\xbf,y) \sim \D} \  \E{h \sim \posterior_v} \Ind{[h(x^v) \neq y]}\,, \\
d_{\Dcal}(Q_v) & =  \Esp_{\xbf \sim \mathcal{D}_{\mathcal{X}}}   \Esp_{h \sim \posterior_v}  \Esp_{h' \sim \posterior_{v}}   \Ind{[ h(x^v) {\ne} h'(x^{v})]}.
\end{align*}

\end{lemma}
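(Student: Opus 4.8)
The plan is to reproduce the classical C-Bound argument, which rests on the one-sided Chebyshev (Cantelli) inequality applied to the margin, and then to relax the denominator by a Jensen step to obtain the second, looser form. First I would introduce the \emph{margin} of the multiview vote on a labelled example,
$$M_\hyperposterior(\xbf,y) \ =\ y\,\E{v\sim\hyperposterior}\E{h\sim\posterior_v} h(x^v)\ \in\ [-1,1].$$
Since $\bqmv(\xbf)=\sign\!\big[\E{v\sim\hyperposterior}\E{h\sim\posterior_v} h(x^v)\big]$, the vote errs on $(\xbf,y)$ exactly when this margin is nonpositive, so that $R_{\Dcal}(\bqmv)=\Esp_{(\xbf,y)\sim\Dcal}\Ind{[M_\hyperposterior(\xbf,y)\le 0]}$ (counting the tie $M_\hyperposterior=0$ as an error). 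The crux is that both moments of $M_\hyperposterior$ rewrite in terms of the Gibbs risk and the disagreement.

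Next I would compute those moments, using that every voter is $\{-1,+1\}$-valued. From $\E{h\sim\posterior_v}\Ind{[h(x^v)\neq y]}=\tfrac12\big(1-y\,\E{h\sim\posterior_v}h(x^v)\big)$ one gets $\Esp_{(\xbf,y)\sim\Dcal} M_\hyperposterior(\xbf,y)=1-2R_{\Dcal}(\gqmv)$. For the second moment, expanding the square introduces an independent second pair $(v',h')$ drawn from $\hyperposterior$ and $\posterior_{v'}$; since $y^2=1$ and $h(x^v)h'(x^{v'})=1-2\,\Ind{[h(x^v)\neq h'(x^{v'})]}$, one obtains $\Esp_{(\xbf,y)\sim\Dcal} M_\hyperposterior(\xbf,y)^2=1-2\,\dmv$. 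The hypothesis $R_{\Dcal}(\gqmv)<\tfrac12$ makes the mean $1-2R_{\Dcal}(\gqmv)$ strictly positive, so Cantelli's inequality $\Pr[M\le 0]\le \sigma^2/(\sigma^2+\mu^2)=1-(\Esp M)^2/\Esp M^2$ applies and yields \eqref{eq:cbound_multiview_4} directly, with the denominator positive because $\Esp M^2\ge(\Esp M)^2>0$. I expect this second-moment reduction, with its careful bookkeeping of the two independent samplings of views and voters, to be the main (if routine) obstacle.

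Finally, for the looser bound \eqref{eq:cbound_multiview_2} I would observe that the two numerators coincide, since $R_{\Dcal}(\gqmv)=\E{v\sim\hyperposterior}R_{\Dcal}(G_{\posterior_v})$ by definition of the Gibbs risk, so it suffices to compare denominators and show $1-2\,\dmv\le 1-2\,\E{v\sim\hyperposterior}d_{\Dcal}(Q_v)$. Writing $g_v(\xbf)=\E{h\sim\posterior_v}h(x^v)$, one checks pointwise that $1-2\,\dmv=\Esp_{\xbf}\big(\E{v\sim\hyperposterior}g_v(\xbf)\big)^2$ while $1-2\,\E{v\sim\hyperposterior}d_{\Dcal}(Q_v)=\Esp_{\xbf}\E{v\sim\hyperposterior}g_v(\xbf)^2$, so the desired inequality is exactly Jensen's inequality for the convex map $t\mapsto t^2$ under $\hyperposterior$, integrated over $\xbf$. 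As the common numerator is nonnegative and both denominators are strictly positive (the smaller one already being so by the previous step), enlarging the denominator can only increase the subtracted fraction, which gives \eqref{eq:cbound_multiview_2} and completes the proof.
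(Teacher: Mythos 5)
Your proposal is correct and follows essentially the same route as the paper's own proof: define the margin, identify its first and second moments with $1-2R_{\Dcal}(G_{\hyperposterior})$ and $1-2\dmv$, apply Cantelli--Chebyshev to get \eqref{eq:cbound_multiview_4}, and use Jensen's inequality for $t\mapsto t^2$ over $v\sim\hyperposterior$ (equivalently, $\dmv \geq \Esp_{v\sim\hyperposterior} d_{\Dcal}(Q_v)$) to get \eqref{eq:cbound_multiview_2}. One wording slip at the end does not affect correctness: enlarging the denominator \emph{decreases} the fraction being subtracted, and it is this decrease that makes $1-N/D$ larger and yields the looser bound \eqref{eq:cbound_multiview_2}.
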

\begin{proof} 
  Similarly than done for the classical C-Bound~\cite{Lacasse06,GermainLLMR15}, Equation~\eqref{eq:cbound_multiview} follows from the Cantelli-Chebyshev's inequality (we provide the proof in \ref{sec:proof_cbound}). \\
Equation~\eqref{eq:cbound_multiview_2} is obtained by rewriting $R_{\mathcal{D}}(G_{\hyperposterior})$ as the $\hyperposterior$-average of the risk associated to each view, and the lower-bounding $\dmv$  by the $\hyperposterior$-average of the disagreement associated to each view.
First we notice that in the binary setting where $y\in\{-1,1\}$ and $h:\Xcal\to\{-1,1\}$, we have 
$\Ind{[h(x^v) \neq y]}  =  \frac12 (1-y\,h(x^v))$, and
\begin{align}
R_{\mathcal{D}}(G_{\rho}) 
  \nonumber 
\ &=\ \E{(\mathbf{x},y) \sim \mathcal{D}} \ \E{ v \sim \hyperposterior} \ \E{h \sim \posterior_v} \Ind{[h(x^v) \neq y]} \nonumber  \\
& 
 =\ \frac{1}{2}\bigg(1- \E{(\xbf,y) \sim \mathcal{D}} \  \E{v \sim \hyperposterior} \ \E{h \sim \posterior_v} y\,h(x^v) \bigg) \nonumber  \\
  & 
  =\  
 \E{v \sim \hyperposterior} R_{\mathcal{D}}(G_{Q_v})\,.\nonumber 
\end{align}
Moreover, we have
\begin{align}
\dmv 
 \nonumber
\ &=\  \Esp_{\xbf \sim \mathcal{D}_{\mathcal{X}}} \Esp_ {v \sim \hyperposterior} \Esp_{v' \sim \hyperposterior}   \Esp_{h \sim \posterior_v}  \Esp_{h' \sim \posterior_{v'}} \Ind{[ h(x^v) {\ne} h'(x^{v'})]} \\
&=\ \frac{1}{2} \bigg( 1- \Esp_{\xbf \sim \mathcal{D}_{\mathcal{X}}} \Esp_ {v \sim \hyperposterior} \Esp_{v' \sim \hyperposterior}   \Esp_{h \sim \posterior_v}  \Esp_{h \sim \posterior_{v'}}  h(x^v) \times h'(x^{v'})  \bigg) \nonumber \\
%&  \nonumber
\nonumber&=\ \frac{1}{2} \bigg( 1- \E{\xbf \sim \mathcal{D}_{\mathcal{X}}}\bigg[  \E{v \sim \hyperposterior} \, \E{h \sim \posterior_v} h(x^v) \bigg]^2  \bigg)  \,.
\end{align}
 From Jensen's inequality (Theorem~\ref{theo:jensen}, in Appendix) it comes
\begin{align*}
\dmv\ 
&\geq\ \frac{1}{2} \bigg( 1- \E{\xbf \sim \mathcal{D}_{\mathcal{X}}} \,  \E{v \sim \hyperposterior}\bigg[  \E{h \sim \posterior_v} h(x^v) \bigg]^2  \bigg) \\
& = \  \E{v \sim \hyperposterior} \Bigg[\frac{1}{2} \bigg( 1- \E{\xbf \sim \mathcal{D}_{\mathcal{X}}}\bigg[ \E{h \sim \posterior_v} h(x^v) \bigg]^2  \bigg) \Bigg] 
\\
&
 =\ \E{v \sim \hyperposterior}  d_{\Dcal}(Q_v)  \,.
\end{align*}
By replacing $R_{\mathcal{D}}(G_{\rho})$ and $\dmv$ in  Equation~\eqref{eq:cbound_multiview_4}, we obtain
\begin{align*}
1 -  \frac{\displaystyle  \big(1-2R_{\Dcal}(G_{\hyperposterior})\big)^2}{\displaystyle 1-2 \dmv}
 \leq 
1 -  \frac{  \Big(1-2  \Esp_ {v \sim \hyperposterior} R_{\Dcal}(G_{Q_v})\Big)^2}{ 1-2 \Esp_ {v \sim \hyperposterior}  d_{\Dcal}(Q_v) } \,.
\end{align*}
\end{proof}
%As pointed out in Section~\ref{sec:PAC-Bayes}

Equation~\eqref{eq:cbound_multiview_4} suggests that a good trade-off between the Gibbs risk and the disagreement between pairs of voters will lead to a well-performing majority vote.
Equation~\eqref{eq:cbound_multiview_2} controls the diversity among the views (important for multiview learning~\cite{Massih09, Goyal2017, chapelle2006semi, Maillard09}) thanks to the disagreement's expectation over the views $\Esp_ {v \sim \hyperposterior}  d_{\Dcal}(Q_v)$.

\subsection[Multiview PAC-Bayesian Theorem]{The General Multiview PAC-Bayesian Theorem}
\label{sec:theorems}
In this section, we give a general multiview PAC-Bayesian theorem~\cite{Goyal2017} that takes the form of a generalization bound for the Gibbs risk in the context of a two-level hierarchy of distributions. %suitable for the above multiview learning setting with a two-level hierarchy of distributions over views. % (or voters).
%\paragraph{\textbf{First bound}}
A key step in PAC-Bayesian proofs is the use of a \textit{change of measure inequality}~\citep{McAllester03}, based on the Donsker-Varadhan inequality~\citep{donsker1975}.
Lemma~\ref{lem:change} below extends this tool to our multiview setting.
\begin{lemma} 
	\label{lem:change}
	For any set of priors $\{\prior_v\}_{v=1}^V$ over $\{\Hcal_v\}_{v=1}^V$ and any set of posteriors $\{\posterior_v\}_{v=1}^V$ over $\{\Hcal_v\}_{v=1}^V$, for any hyper-prior distribution $\hyperprior$ on views $\Vcal$ and hyper-posterior distribution $\hyperposterior$ on $\Vcal$, 
	and for any measurable function $\phi  :  \mathcal{H}_v  \to  \mathbb{R}$, we have
	\begin{align*}
	& \E{ v \sim \hyperposterior}\  \E{ h \sim \posterior_v} \phi (h)  
	 \leq\  \E{ v \sim \hyperposterior}   \KL(\posterior_v \| \prior_v)  +  \KL(\hyperposterior \| \hyperprior)  +    \ln  \left(  \E{ v \sim \hyperprior}\   \E{ h \sim \prior_v}   e^{\phi (h)}  \right). %\\[-0.8cm]
	\end{align*}
\end{lemma}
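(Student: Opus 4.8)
The plan is to apply the classical single-level change of measure (Donsker-Varadhan) inequality twice, exploiting the nesting of the two distribution hierarchies: once \emph{inside} each view, at the level of voters, and once \emph{across} the views. Recall that the classical inequality states that for any prior $P$ and posterior $Q$ over a set $\Hcal$ and any measurable $\psi : \Hcal \to \mathbb{R}$, one has $\E{h \sim Q}\psi(h) \leq \KL(Q\|P) + \ln \E{h \sim P} e^{\psi(h)}$. This is the only external ingredient I would need.

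First I would fix a view $v \in \Vcal$ and apply this inequality to the pair $(\posterior_v,\prior_v)$ with the given $\phi$, which gives
\[
\E{h \sim \posterior_v} \phi(h) \ \leq\ \KL(\posterior_v \| \prior_v) + \ln \E{h \sim \prior_v} e^{\phi(h)}.
\]
Since this holds for every $v$ and taking a $\hyperposterior$-expectation preserves the inequality (expectation is monotone), I obtain
\[
\E{v \sim \hyperposterior}\, \E{h \sim \posterior_v} \phi(h) \ \leq\ \E{v \sim \hyperposterior} \KL(\posterior_v \| \prior_v) + \E{v \sim \hyperposterior} \ln \E{h \sim \prior_v} e^{\phi(h)}.
\]
It then remains to control the last term, which is the only non-trivial point of the argument.

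The key step is to repackage the inner prior-expectation as a single function of the view. I would define the auxiliary function $\Phi(v) := \ln \E{h \sim \prior_v} e^{\phi(h)}$ on $\Vcal$ and apply the change of measure inequality a \emph{second} time, now to the pair $(\hyperposterior,\hyperprior)$ with $\Phi$ in the role of $\psi$:
\[
\E{v \sim \hyperposterior} \Phi(v) \ \leq\ \KL(\hyperposterior \| \hyperprior) + \ln \E{v \sim \hyperprior} e^{\Phi(v)}.
\]
The crucial simplification is that $e^{\Phi(v)} = \E{h \sim \prior_v} e^{\phi(h)}$ by the very definition of $\Phi$, so the two logarithms collapse into one and the right-hand side becomes $\KL(\hyperposterior \| \hyperprior) + \ln \E{v \sim \hyperprior} \E{h \sim \prior_v} e^{\phi(h)}$. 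Substituting this back into the previous display yields exactly the claimed bound.

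The argument is essentially mechanical once the nesting is in place; the main obstacle is conceptual rather than computational, namely recognizing that the nested logarithm $\E{v \sim \hyperposterior}\ln\E{h\sim\prior_v}e^{\phi(h)}$ should be treated as the $\hyperposterior$-expectation of a single function $\Phi$ of the view, so that the outer Donsker-Varadhan step reconstructs the combined exponential moment $\E{v \sim \hyperprior}\E{h\sim\prior_v}e^{\phi(h)}$ and merges the two $\KL$ terms additively. For full rigor I would also remark that the measurability of $\phi$ guarantees that $\Phi$ is well defined and measurable, so the second application of the inequality is legitimate.
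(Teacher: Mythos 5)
Your proof is correct and takes essentially the same route as the paper's: both arguments perform a change of measure twice, first at the voter level inside each view and then at the view level, producing the term $\E{v \sim \hyperposterior}\KL(\posterior_v\|\prior_v)$, then $\KL(\hyperposterior\|\hyperprior)$, and finally the single log-moment term exactly as in your computation. The only difference is presentational: the paper inlines the Donsker--Varadhan argument at each level (inserting the density ratios $\posterior_v(h)/\prior_v(h)$ and $\hyperposterior(v)/\hyperprior(v)$ and applying Jensen's inequality to the concave $\ln$), whereas you invoke the classical single-level change-of-measure inequality as a black box and observe that $e^{\Phi(v)}$ collapses the nested logarithms.
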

\begin{proof}
Deferred to \ref{proof:change}
\end{proof}

Based on Lemma~\ref{lem:change}, the following theorem gives a generalization bound for multiview learning. 
Note that, as done by \cite{GermainLLM09,GermainLLMR15} we rely on a general convex function $D : [0,1]  \times  [0,1] \to \mathbb{R}$, which measures the ``deviation'' between the empirical and the true Gibbs risk.

\begin{theorem}%[] 
	\label{theo:MVPB1}
	Let $V \geq 2$ be the number of views. 
	For any distribution $\D$ on \mbox{$\Xcal\times\Ycal$}, for any set of prior distributions $\{\prior_v\}_{v=1}^V$ over $\{\Hcal_v\}_{v=1}^V$, for any hyper-prior distributions $\hyperprior$ over $\Vcal$, for any convex function $D : [0,1]\times[0,1]\to \mathbb{R}$, for any $\delta\in(0,1]$, 
	with a probability at least $1 - \delta$ over the random choice of $S \sim(\D)^n$, for all posterior $\{\posterior_v\}_{v=1}^V$  over $\{\Hcal_v\}_{v=1}^V$ and hyper-posterior $\hyperposterior$ over $\Vcal$ distributions, we have:
	\begin{align*}
	& D \left(   R_{S}(G_{\hyperposterior})
	, R_{\mathcal{D}}(G_{\hyperposterior})\right) \leq \frac{1}{m} \bigg[ \E{ v \sim \hyperposterior} \,  \KL(\posterior_v \| \prior_v)\\ 
	&\qquad  \qquad \qquad \qquad \qquad+  \KL(\hyperposterior \| \hyperprior) + \ln\left( \frac{1}{\delta}\, \E{ S \sim (\D)^n} \,  \E{ v \sim \hyperprior} \,  \E{ h \sim \prior_v}  e^{ n D\left(R_S(h) , R_\D(h)\right)} \right) \bigg] .
	\end{align*}
\end{theorem}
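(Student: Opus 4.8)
The plan is to combine three ingredients in sequence: the convexity of $D$ through Jensen's inequality, the multiview change-of-measure inequality of Lemma~\ref{lem:change}, and finally Markov's inequality to turn an expectation over samples into a high-probability guarantee.

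First I would exploit that both Gibbs risks are \emph{linear} in the per-voter risks. Indeed $R_{S}(G_{\hyperposterior}) = \E{v\sim\hyperposterior}\E{h\sim\posterior_v}R_S(h)$ and $R_{\mathcal{D}}(G_{\hyperposterior}) = \E{v\sim\hyperposterior}\E{h\sim\posterior_v}R_{\mathcal{D}}(h)$ are $(\hyperposterior,\posterior_v)$-averages of the empirical and true risks of the individual voters. Since $D$ is convex, Jensen's inequality (Theorem~\ref{theo:jensen}) applied to this double average yields
$$D\big(R_{S}(G_{\hyperposterior}),\, R_{\mathcal{D}}(G_{\hyperposterior})\big) \ \leq\ \E{v\sim\hyperposterior}\E{h\sim\posterior_v}D\big(R_S(h),\, R_{\mathcal{D}}(h)\big).$$

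Next I would multiply by $n$ and apply the multiview change-of-measure inequality (Lemma~\ref{lem:change}) with the measurable function $\phi(h) = n\,D(R_S(h), R_{\mathcal{D}}(h))$. Chaining the two bounds gives
$$n\,D\big(R_{S}(G_{\hyperposterior}),\, R_{\mathcal{D}}(G_{\hyperposterior})\big) \ \leq\ \E{v\sim\hyperposterior}\KL(\posterior_v\|\prior_v) + \KL(\hyperposterior\|\hyperprior) + \ln\Big(\E{v\sim\hyperprior}\E{h\sim\prior_v}e^{nD(R_S(h),R_{\mathcal{D}}(h))}\Big).$$
The remaining step, and the one deserving care, is to control the logarithmic term, which is still a random quantity depending on $S$ through the empirical risks $R_S(h)$. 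The crucial observation is that this term involves \emph{only} the fixed prior and hyper-prior $\prior_v,\hyperprior$, and not the learned posteriors $\posterior_v,\hyperposterior$. I would therefore set $\Xi_S = \E{v\sim\hyperprior}\E{h\sim\prior_v}e^{nD(R_S(h),R_{\mathcal{D}}(h))}$ and apply Markov's inequality to the nonnegative random variable $\Xi_S$: with probability at least $1-\delta$ over $S\sim(\D)^n$,
$$\Xi_S \ \leq\ \frac{1}{\delta}\,\E{S\sim(\D)^n}\Xi_S.$$
Taking logarithms, substituting into the previous display, and dividing by $n$ then delivers the claimed bound.

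The main obstacle is guaranteeing uniformity over \emph{all} posteriors $\{\posterior_v\}$ and hyper-posteriors $\hyperposterior$ simultaneously. This is exactly why Markov's inequality must be applied to $\Xi_S$, a function of the data and the fixed priors alone, rather than to the full right-hand side of the bound. Once the single high-probability event $\{\Xi_S \leq \tfrac{1}{\delta}\,\E{S}\Xi_S\}$ is fixed, the Jensen and change-of-measure steps hold \emph{deterministically} for every choice of $\posterior_v$ and $\hyperposterior$, which is what yields the ``for all posteriors'' statement without any union bound or covering argument.
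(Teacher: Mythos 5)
Your proposal is correct and is essentially the paper's own proof: it uses the same three ingredients---Markov's inequality applied to the prior-only random variable $\E{v \sim \hyperprior}\,\E{h \sim \prior_v} e^{n D(R_S(h),R_\D(h))}$ (which is what guarantees uniformity over all posteriors and hyper-posteriors), the change-of-measure Lemma~\ref{lem:change} with $\phi(h)=n\,D(R_S(h),R_\D(h))$, and Jensen's inequality on the convex function $D$---around exactly the same high-probability event. The only difference is the order of presentation (the paper applies Markov first and then unwinds the left-hand side via change of measure and Jensen, while you chain Jensen and change of measure first and invoke Markov last), which is immaterial to the logic.
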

\begin{proof}
	First, note that $\E{ v \sim \hyperprior} \,  \E{ h \sim \prior_v} e^{ n D(R_S(h) , R_\D(h))}$ is a non-negative random variable. Using Markov's inequality, %(Lemma \ref{theo:markov} in Appendix~\ref{sec:Appendix}), 
	with  $\delta  \in  (0,1]$, and a probability at least $1 - \delta$ over the random choice of the multiview learning sample $S \sim (\D)^n$, we have
	\begin{align*}
	\E{ v \sim \hyperprior} \  \E{ h \sim \prior_v} e^{ n\,D(R_S(h) , R_\D(h))} 
	\leq \ \frac{1}{\delta} \E{ S \sim (\D)^n} \  \E{ v \sim \hyperprior} \  \E{ h \sim \prior_v} e^{ n D(R_S(h) , R_\D(h))}.
	\end{align*}
	By taking the logarithm on both sides, with a probability at least $1 - \delta$ over  $S \sim (\D)^n$, we have
	\begin{align} 
	\label{eq:proof1}
	\ln \bigg [ \E{ v \sim \hyperprior} \  \E{ h \sim \prior_v} e^{ n\,D(R_S(h) , R_\D(h))} \bigg] 
	\leq   \ln \bigg[  \frac{1}{\delta} \E{ S \sim (\D)^n} \  \E{ v \sim \hyperprior} \  \E{ h \sim \prior_v} e^{ n D(R_S(h) , R_\D(h))} \bigg]
	\end{align}
	We now apply Lemma~\ref{lem:change} on the left-hand side of the Inequality~\eqref{eq:proof1} with \mbox{$\phi(h)= n\,D(R_S(h) , R_\D(h))$}. 
	Therefore,  for any $ \posterior_v$ on $\mathcal{H}_v$ for all views $v\in\Vcal$, and for any $\hyperposterior$ on views $\Vcal$, with a probability at least $1 - \delta$ over  $S \sim (\D)^n$, we have
	\begin{align*}
	& \ln \bigg[ \underset{ v \sim \hyperprior}{\mathbb{E}} \  \underset{ h \sim \prior_v}{\mathbb{E}} e^{ n\,D(R_S(h) , R_\D(h))} \bigg] \\ 
	& \ge\   n\, \underset{ v \sim \hyperposterior}{\mathbb{E}} \  \underset{ h \sim \posterior_v}{\mathbb{E}} D(R_S(h) , R_\D(h)) - \underset{ v \sim \hyperposterior}{\mathbb{E}} \  \KL(\posterior_v \| \prior_v) - \KL(\hyperposterior \| \hyperprior)   \\
	&\ge\   n\, D\left(\underset{ v \sim \hyperposterior}{\mathbb{E}} \  \underset{ h \sim \posterior_v}{\mathbb{E}} R_S(h) , \underset{ v \sim \hyperposterior}{\mathbb{E}} \  \underset{ h \sim \posterior_v}{\mathbb{E}} R_\D(h)\right) - \underset{ v \sim \hyperposterior}{\mathbb{E}} \ \KL(\posterior_v \| \prior_v) - \KL(\hyperposterior \| \hyperprior),
	\end{align*} 
	where the last inequality is obtained by applying Jensen's inequality on the convex function $D$. By rearranging the terms, we have
	\begin{align*}
	 D\left(\underset{ v \sim \hyperposterior}{\mathbb{E}} \  \underset{ h \sim \posterior_v}{\mathbb{E}} R_S(h) , \underset{ v \sim \hyperposterior}{\mathbb{E}} \  \underset{ h \sim \posterior_v}{\mathbb{E}} R_\D(h)\right)\,  & \leq \,   \frac{1}{m} \bigg[ \underset{ v \sim \hyperposterior}{\mathbb{E}} \  \KL(\posterior_v \| \prior_v) +  \KL(\hyperposterior \| \hyperprior) \\
	        +   \ln \bigg( \frac{1}{\delta} &\underset{ S \sim (\D)^n}{\mathbb{E}} \  \underset{ v \sim \hyperprior}{\mathbb{E}} \  \underset{ h \sim \prior_v}{\mathbb{E}} e^{ n\, D\left(R_S(h) , R_\D(h)\right)} \bigg)  	\bigg]
	\end{align*}
	Finally, the theorem statement is obtained by rewriting
	\begin{align}
	\Esp_{ v \sim \hyperposterior} \Esp_{ h \sim \posterior_v} R_S(h)  &= R_S(G_{\hyperposterior}), \label{eq:substitution1}\\ \label{eq:substitution2}
	\Esp_{ v \sim \hyperposterior} \Esp_{ h \sim \posterior_v} R_\D(h)  &= R_\D(G_{\hyperposterior})\,. 
	\end{align}
\end{proof}

Compared to the classical single-view PAC-Bayesian Bound of \cite{GermainLLM09,GermainLLMR15}, the main difference relies on the introduction of the view-specific prior and posterior distributions, which mainly leads to an additional term $\mathbf{E}_{ v \sim \hyperposterior} \KL(\posterior_v \| \prior_v)$ expressed as the expectation of the view-specific Kullback-Leibler divergence term over the views $\Vcal$ according to the hyper-posterior distribution $\hyperposterior$. 

Theorem~\ref{theo:MVPB1} provides tools to derive PAC-Bayesian generalization bounds for a multiview supervised learning setting. Indeed, by making use of the same trick as \cite{GermainLLM09,GermainLLMR15}, by choosing a suitable convex function $D$ and upper-bounding $\E{ S \sim (\D)^n} \  \E{ v \sim \hyperprior} \E{ h \sim \prior_v} e^{ n\, D(R_S(h) , R_\D(h))} $, we obtain instantiation of  Theorem~\ref{theo:MVPB1} .
In the next section we give an example of this kind of deviation through the approach of \cite{catoni2007pac}, that is one of the three classical PAC-Bayesian Theorems~\cite{catoni2007pac,McAllester99,Seeger02,Langford05}. %Note that  we provide the specialization to the two other approaches in supplementary material. 

\subsection[Instantiation of MV PB Theorem]{An Example of Instantiation of the Multiview PAC-Bayesian Theorem} % for Multi-View} % to a Classical PAC-Bayesian Bound}
\label{sec:catoni}
%% Now, we present one specialization of our PAC-Bayesian theorems through the classical PAC-Bayesian approach of Catoni~\cite{catoni2007pac}. We make use of the same trick as Germain {\it et al.}~\cite{GermainLLM09,GermainLLMR15}: The generalization bounds can be derived from Theorems~\ref{theo:MVPB1} and~\ref{theo:MVPB2} by choosing a suitable convex function $D$ and upper-bounding $\E{ S \sim (\D)^m} \  \E{ v \sim \hyperprior} \E{ h \sim \prior_v} e^{ m\, D(R_S(h) , R_\D(h))} $. 
%By expressing the risk of majority vote for multiview learning in terms of expected disagreement and expected joint error using Equation~\ref{eq:SplitGibbsRisk} and 
To obtain the following theorem which is a generalization bound with the \cite{catoni2007pac}'s point of view, we put $D$ as 
%we define the measure of deviation between the empirical disagreement/joint error and the true risk as
$D(a,b)=\mathcal{F}(b)-C\,a$ where $\cal F$ is a  convex function ${\cal F}$ and $C>0$ is a real number~\citep{GermainLLM09,GermainLLMR15}.
%we considering following linear function as a measure of divergence between the empirical disagreement/joint error and the true risk:
%$$D(a,b)=\mathcal{F}(b)-C\,a$$
%where $\mathcal{F}$ is a convex function \cite{GermainLLM09}.
%We obtain the following generalization bound. % that stands according to the Catoni's~\cite{catoni2007pac} approach.
\begin{cor}
	%\allowdisplaybreaks[4]
	\label{cor:catoni}
	Let $V \geq 2$ be the number of views. 
	For any distribution $\mathcal{D}$ on $\Xcal \times \Ycal$, for any set of prior distributions $\{\prior_v\}_{v=1}^V$ on $\{\Hcal\}_{v=1}^V$, 
	for any hyper-prior distributions $\hyperprior$ over $\Vcal$, 
%	for any $p,q > 0$ such that $\frac1q + \frac1p=1$, for all $C>0$, 
	for any $\delta  \in  (0,1]$, with a probability at least $1 - \delta$ over the random choice of $S \sim  (\D)^n$ for all posterior $\{\posterior_v\}_{v=1}^V$ and hyper-posterior $\hyperposterior$ distributions, we have:
	\begin{align*}
	 &R_{\mathcal{D}}(G_{\hyperposterior})\\
	\le &\frac{1}{1\!-\!e^{-C}}\Bigg(\! 1-\! \exp  \bigg[-\!\!\bigg(C\, R_S(G_\rho)
	+ \frac{1}{n} \Big[ \E{ v \sim \hyperposterior}  \KL(\posterior_v \| \prior_v) +  \KL(\hyperposterior \| \hyperprior) + \ln \tfrac{1}{\delta} 	\Big] \bigg) \bigg]\Bigg).
	\end{align*}
%	and
%	\begin{align*}
%	&R_{\mathcal{D}}(G_{\hyperposterior})\\
%	\le\ &\frac{1}{1-e^{-C}}\Bigg( 1- \exp \bigg[ -\bigg(C\,\bigg(R_S(G_\rho)\bigg) \\
%	+ &\frac{1}{n} \bigg[  \beta_q(\hyperposterior \| \hyperprior) \bigg[ \E{ v \sim \hyperprior}   \KL(\posterior_v \| \prior_v)^p \bigg]^{ \frac{1}{p}}    +   \KL(\hyperposterior \| \hyperprior)  +  \ln  \tfrac{1}{\delta} 	\bigg] \bigg)  \bigg] \Bigg).
%	\end{align*}
\end{cor}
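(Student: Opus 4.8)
The plan is to instantiate Theorem~\ref{theo:MVPB1} with the Catoni-style deviation $D(a,b) = \mathcal{F}(b) - C\,a$ and then choose $\mathcal{F}$ so that the exponential-moment term collapses to at most $1$. First I note that such a $D$ is jointly convex, being the sum of a convex function of $b$ and an affine function of $a$, so the hypotheses of Theorem~\ref{theo:MVPB1} are met once I verify that $\mathcal{F}$ is convex; I will confirm this for the specific $\mathcal{F}$ chosen below. Substituting this $D$, the left-hand side $D(R_S(G_\rho),R_\D(G_\rho))$ becomes $\mathcal{F}(R_\D(G_\rho)) - C\,R_S(G_\rho)$, so the theorem yields
\begin{align*}
\mathcal{F}\big(R_\D(G_\rho)\big) - C\,R_S(G_\rho) &\leq \frac{1}{n}\bigg[\E{v \sim \hyperposterior}\KL(\posterior_v\|\prior_v) + \KL(\hyperposterior\|\hyperprior) \\
&\qquad + \ln\Big(\tfrac{1}{\delta}\,\E{S\sim(\D)^n}\E{v\sim\hyperprior}\E{h\sim\prior_v} e^{nD(R_S(h),R_\D(h))}\Big)\bigg].
\end{align*}

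The crux is to control the inner expectation over $S$. The key observation is that for a \emph{fixed} voter $h$ on view $v$, the quantity $n\,R_S(h) = \sum_{i=1}^n \Ind{[h(x_i^v)\neq y_i]}$ is a sum of $n$ i.i.d.\ Bernoulli variables of mean $R_\D(h)$, and the $S$-dependence of $D$ enters only through the factor $e^{-Cn R_S(h)}$. Hence I would compute
\begin{align*}
\E{S\sim(\D)^n} e^{nD(R_S(h),R_\D(h))} &= e^{n\mathcal{F}(R_\D(h))}\,\E{S} e^{-Cn R_S(h)} \\
&= e^{n\mathcal{F}(R_\D(h))}\Big(1 - (1-e^{-C})R_\D(h)\Big)^n.
\end{align*}
This dictates the choice $\mathcal{F}(b) = -\ln\!\big(1-(1-e^{-C})b\big)$, which forces each factor to equal $1$, so that $\E{S}\, e^{nD(R_S(h),R_\D(h))} = 1$ for every $h$. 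Since the priors $\hyperprior$ and $\prior_v$ do not depend on $S$, Fubini's theorem lets me exchange the order of expectations and conclude that $\E{S\sim(\D)^n}\E{v\sim\hyperprior}\E{h\sim\prior_v} e^{nD(R_S(h),R_\D(h))} \leq 1$, so the logarithm is at most $\ln\tfrac1\delta$.

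It remains to invert $\mathcal{F}$. Writing $\Phi = C\,R_S(G_\rho) + \tfrac1n\big[\mathbf{E}_{v\sim\hyperposterior}\KL(\posterior_v\|\prior_v)+\KL(\hyperposterior\|\hyperprior)+\ln\tfrac1\delta\big]$ for the right-hand side, the bound reads $-\ln\big(1-(1-e^{-C})R_\D(G_\rho)\big) \leq \Phi$, which rearranges to $R_\D(G_\rho) \leq (1-e^{-\Phi})/(1-e^{-C})$, exactly the claimed inequality. The only genuinely delicate point is the exponential-moment computation together with the matching choice of $\mathcal{F}$; the convexity check for $\mathcal{F}$ (its second derivative is $(1-e^{-C})^2/(1-(1-e^{-C})b)^2 > 0$) and the final algebraic inversion are routine. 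I should also note that $1-(1-e^{-C})b > 0$ for all $b\in[0,1]$ because $1-e^{-C}<1$, so $\mathcal{F}$ is well-defined on $[0,1]$ and, unlike the C-Bound of Lemma~\ref{lem:mv-cbound}, no restriction such as $R_\D(G_\rho)<\tfrac12$ is needed here.
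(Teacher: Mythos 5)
Your proof is correct and follows essentially the same route as the paper: instantiate Theorem~\ref{theo:MVPB1} with $D(a,b)=\mathcal{F}(b)-C\,a$, evaluate the exponential moment of the binomial variable $nR_S(h)$ (you via the product of Bernoulli moment generating functions, the paper via the expanded binomial sum --- the same computation), and pick $\mathcal{F}(p)=-\ln\bigl(1-p(1-e^{-C})\bigr)$ so that the moment term collapses to $1$, leaving only the algebraic inversion. Your added checks (convexity of $\mathcal{F}$, positivity of $1-(1-e^{-C})b$, and the explicit final rearrangement) are details the paper leaves implicit, not a different argument.
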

\begin{proof} Deferred to \ref{proof:cor_catoni}.
\end{proof}

This bound has the advantage of expressing  a trade-off between the empirical Gibbs risk and the Kullback-Leibler divergences.
%From a practical standpoint, it could help to learn a \mbox{(hyper-)}posterior distribution performing well
%This helps to learn the distributions which perform well
%by taking into consideration all the views along with the performances of individual view specific majority vote and diversity.

\subsection{A Generalization Bound for the C-Bound}

From a practical standpoint, as pointed out before, controlling the multiview C-Bound of Equation~\eqref{eq:cbound_multiview_2} can be very useful for tackling multiview learning.
The next theorem is a generalization bound that justify the empirical minimization of the multiview C-bound (we use in our algorithm $\pbboost$ derived in Section~\ref{sec:pbboost}).
\begin{theorem}%[Multiview PAC-Bayesian  Bound (MVPB 1)]
	\label{theo:C-Bound-Multiview}
	Let $V \geq 2$ be the number of views. 
	For any distribution $\D$ on $\Xcal \times \Ycal$, for any set of prior distributions $\{\prior_v\}_{v=1}^V$, for any hyper-prior distributions $\hyperprior$ over views $\Vcal$, and for any convex function $D : [0,1]  \times  [0,1]  \to \mathbb{R}$, 
	with a probability at least $1 - \delta$ over the random choice of $S \sim  (D)^n$ for all posterior $\{\posterior_v\}_{v=1}^v$ and hyper-posterior $\hyperposterior$ distributions, we have:
	\begin{align*}
	R_{\mathcal{D}}(B_{\rho}) \le 1- \frac{\bigg(1-2 \E{v \sim \rho} \mathtt{sup} \  \big(\mathbf{r}_{Q_v,\mathcal{S}}^{\delta/2} \big)\bigg)^2}{1-2 \E{v \sim \rho}  \mathtt{inf } \   \mathbf{d}_{Q_v,\mathcal{S}}^{\delta/2}}\,, 
	\end{align*}
	where
	\begin{align}
	\mathbf{r}_{Q_v,\mathcal{S}}^{\delta/2} &= \bigg\{ r : \kl(R_S(Q_v)\|r) \leq \frac{1}{n} \bigg[ \KL(\posterior_v \| \prior_v)  + \ln \frac{4 \sqrt{m}}{\delta} \bigg]    \mbox{ and } r \leq \frac{1}{2} \bigg\} \label{eq:r_interval}\\
\mbox{and}\quad 	\mathbf{d}_{Q_v,\mathcal{S}}^{\delta/2} &= \bigg\{ d : \kl(d_{Q_v} \pwr S\|d) \leq \frac{1}{n} \bigg[ 2.\KL(\posterior_v \| \prior_v)  + \ln \frac{4 \sqrt{m}}{\delta} \bigg]  \bigg\} 
	\label{eq:d_interval}
	\end{align}
\end{theorem}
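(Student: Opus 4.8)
The plan is to combine the distribution-free multiview C-Bound of Lemma~\ref{lem:mv-cbound} (Equation~\eqref{eq:cbound_multiview_2}) with per-view PAC-Bayesian confidence intervals, one controlling each view-specific Gibbs risk from above and one controlling each view-specific disagreement from below. Equation~\eqref{eq:cbound_multiview_2} already bounds $R_\Dcal(B_\hyperposterior)$ by the rational expression $1-\frac{\big(1-2\,\E{v\sim\hyperposterior}R_\Dcal(G_{\posterior_v})\big)^{2}}{1-2\,\E{v\sim\hyperposterior}d_\Dcal(\posterior_v)}$, so it suffices to replace the unknown true quantities $R_\Dcal(G_{\posterior_v})$ and $d_\Dcal(\posterior_v)$ by computable high-probability bounds and then argue that the substitution can only enlarge the right-hand side.

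For the numerator I would use the $\kl$-instantiation of Theorem~\ref{theo:MVPB1} (choosing the convex deviation $D=\kl$), equivalently the classical single-view bound of \cite{Seeger02}, applied to each view $v$ with prior $\prior_v$ and posterior $\posterior_v$: with probability at least $1-\tfrac\delta2$ over $S\sim(\Dcal)^n$, simultaneously for every $\posterior_v$, $\kl(R_S(\posterior_v)\,\|\,R_\Dcal(G_{\posterior_v}))\le\tfrac1n[\KL(\posterior_v\|\prior_v)+\ln\tfrac{4\sqrt m}{\delta}]$, where the constant $\tfrac{4\sqrt m}{\delta}=\tfrac{2\sqrt m}{\delta/2}$ records the confidence split. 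Since $p\mapsto\kl(\hat p\|p)$ is increasing for $p>\hat p$, this forces $R_\Dcal(G_{\posterior_v})\le\mathtt{sup}\,(\mathbf r_{Q_v,\mathcal S}^{\delta/2})$, the clamp $r\le\tfrac12$ in~\eqref{eq:r_interval} keeping us inside the validity regime $R_\Dcal(G_{\hyperposterior})<\tfrac12$ of the C-Bound (it also guarantees that the clamped supremum remains a valid upper bound whenever the true per-view risk is at most $\tfrac12$).

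For the denominator I would read the expected disagreement $d_\Dcal(\posterior_v)$ as the Gibbs risk of the product posterior $\posterior_v\otimes\posterior_v$ over $\Hcal_v\times\Hcal_v$ under the pairwise loss $\Ind{[h(x^v)\ne h'(x^v)]}$, and apply the same $\kl$-bound with prior $\prior_v\otimes\prior_v$. Because $\KL(\posterior_v\otimes\posterior_v\,\|\,\prior_v\otimes\prior_v)=2\,\KL(\posterior_v\|\prior_v)$, this produces exactly the factor $2\,\KL(\posterior_v\|\prior_v)$ of~\eqref{eq:d_interval} and, using that $d\mapsto\kl(\hat d\|d)$ is decreasing for $d<\hat d$, yields the lower bound $d_\Dcal(\posterior_v)\ge\mathtt{inf}\,\mathbf d_{Q_v,\mathcal S}^{\delta/2}$ with probability at least $1-\tfrac\delta2$. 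A union bound over these two deviation directions makes both statements hold together with probability at least $1-\delta$.

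It then remains to take the $\hyperposterior$-expectation of the two pointwise bounds, giving $\E{v\sim\hyperposterior}R_\Dcal(G_{\posterior_v})\le\E{v\sim\hyperposterior}\mathtt{sup}\,(\mathbf r_{Q_v,\mathcal S}^{\delta/2})$ and $\E{v\sim\hyperposterior}d_\Dcal(\posterior_v)\ge\E{v\sim\hyperposterior}\mathtt{inf}\,\mathbf d_{Q_v,\mathcal S}^{\delta/2}$, and to substitute them into Equation~\eqref{eq:cbound_multiview_2}. The step I expect to be the crux is getting the direction of this substitution right through the nonlinear C-Bound: the map $(a,b)\mapsto 1-\frac{(1-2a)^2}{1-2b}$ is nondecreasing in $a$ on $[0,\tfrac12]$ and nonincreasing in $b$ for $b<\tfrac12$, so upper-bounding the risk $a$ and lower-bounding the disagreement $b$ both only enlarge it; one must also check the denominators stay positive, which holds because $\E{v\sim\hyperposterior}\mathtt{inf}\,\mathbf d_{Q_v,\mathcal S}^{\delta/2}\le\E{v\sim\hyperposterior}d_\Dcal(\posterior_v)$ keeps $1-2\,\E{v\sim\hyperposterior}\mathtt{inf}\,\mathbf d_{Q_v,\mathcal S}^{\delta/2}\ge1-2\,\E{v\sim\hyperposterior}d_\Dcal(\posterior_v)>0$. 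With these monotonicity and positivity facts verified, the substitution yields precisely the claimed inequality. One point deserving care is that the per-view guarantees are needed for every view in the support of $\hyperposterior$ on the shared sample $S$; for full rigour over all $V$ views one would absorb a union bound over $v$ into the confidence level.
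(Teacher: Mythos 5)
Your proposal is correct and takes essentially the same route as the paper: the paper's proof simply cites Equations (23) and (24) of \cite{GermainLLMR15} to define the per-view $\kl$-based confidence sets (including the product-posterior argument that yields the $2\,\KL(\posterior_v \| \prior_v)$ factor in Equation~\eqref{eq:d_interval}), and then substitutes the resulting upper bound on $R_{\Dcal}(G_{Q_v})$ and lower bound on $d_{\Dcal}(Q_v)$ into Equation~\eqref{eq:cbound_multiview_2} of Lemma~\ref{lem:mv-cbound}, exactly as you do. Your write-up merely makes explicit the details the paper leaves implicit—the $\delta/2$ confidence split, the monotonicity of $(a,b)\mapsto 1-\frac{(1-2a)^2}{1-2b}$, the positivity of the denominator, and the union-bound issue over the $V$ views (which the paper's sketch indeed glosses over).
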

\begin{proof}
  Similarly to Equations $(23)$ and $(24)$ of \cite{GermainLLMR15}, we define the sets $\mathbf{r}_{Q_v,\mathcal{S}}^{\delta/2}$ (Equation~\eqref{eq:r_interval}) and $\mathbf{d}_{Q_v,\mathcal{S}}^{\delta/2}$ (Equation~\eqref{eq:d_interval}) for our setting. Finally, the bound is obtained (from Equation~\eqref{eq:cbound_multiview_2} of Lemma \ref{lem:mv-cbound}) by replacing the view-specific Gibbs  risk $R_{\D}(G_{Q_v})$ by its upper bound $\mathtt{sup} \  \mathbf{r}_{Q_v,\mathcal{S}}^{\delta/2}$ and  expected disagreement $d_{\D}(Q_v)$ by its lower bound $\mathtt{inf} \ \mathbf{d}_{Q_v,\mathcal{S}}^{\delta/2}$.
\end{proof}

\section{The $\pbboost$ algorithm}
\label{sec:pbboost}

\begin{algorithm}
	\caption{$\pbboost$}\label{alg:pbboost}
	\textbf{Input: } Training set $S = (\mathbf{x}_i, y_i), \dots , (\mathbf{x}_n, y_n)$, where $\mathbf{x}_i = (x\pwr 1, x\pwr 2, \dots , x\pwr V)$ and $y_i \in \{-1,1\}$. \\
	%	\text{\hspace{1.3cm}} Distribution $\mathcal{D}$ over the $n$ examples. \\
	\text{\hspace{1.3cm}} For each view $v \in \mathcal{V}$, a view-specific hypothesis set $\mathcal{H}_ v$.\\
	\text{\hspace{1.3cm}} Number of iterations $T$.
	
	\begin{algorithmic}[1]
		\For{$\mathbf{x}_i \in S$}
		\State $\D_1(\mathbf{x}_i) \gets \frac{1}{n}$
		\EndFor		
		\State $ \forall v \in \Vcal$ $\rho_v \pwr 1 \gets \frac{1}{V}$
		\vspace{8pt}
		
		\For{$t = 1,\dots , T$}
		\State $\forall v \in \Vcal, \   h_v \pwr t  \gets \argmin_{h \in \Hcal_v} \E{(\mathbf{x}_i, y_i) \sim \Dcal_t}\left[\Ind{ [h(x_i \pwr v) \ne y_i ]}\right] $
		
		\State  Compute error: $\forall v \in \Vcal, \ \epsilon_v \pwr t \gets \E{(\mathbf{x}_i, y_i ) \sim \Dcal_t}\left[\Ind{[h_v \pwr t (x_i \pwr v) \ne y_i]} \right]$
		
		\State Compute voter weights (taking into account view specific information):
		\begin{align*}
		&  \forall v \in \Vcal, Q_v \pwr t  \gets \frac{1}{2} \bigg[ \ln \bigg( \frac{1- \epsilon_v \pwr t }{\epsilon_v \pwr t} \bigg) \bigg]
		\end{align*}
		
		%		\State $\forall v \in \Vcal, \ H \pwr v \gets H \pwr v \cup \{ Q_t \pwr v, h_t \pwr v \}$
		\State \textbf{Optimize} the multiview C-Bound to learn weights over the views
		\vspace{-8pt}
		\begin{align*}
		\mathrm{max}_{\rho} & \qquad \frac{\bigg[1 - 2 \sum_{v=1}^{V} \rho_v \pwr t r_v \pwr t \bigg]^{2} }{1- 2 \sum_{v=1}^{V} \rho_v \pwr t d_v \pwr t} \\
		s.t. & \qquad \sum_{v=1}^{V} \rho_v \pwr t =1 , \quad \rho_v \pwr t \ge 0 \quad \forall v \in \{ 1, ..., V\} \\
		%		where, \  & \forall v \in \Vcal, \ d_t \pwr v = \E{(\mathbf{x}_i, y_i ) \sim \Dcal_t} \  \E{h,h'\sim \Hcal \pwr v} \I [h_t  (x_i \pwr v) \ne h'_t (x_i \pwr v) ] 
		\end{align*}
		\vspace{-45pt}
		\begin{align*}
		\mbox{ where }
		& \forall v \in \Vcal, \ r_v \pwr t \gets \E{(\mathbf{x}_i, y_i ) \sim \Dcal_t} \  \E{h \sim \mathcal{H}_v}\left[\Ind{[h  (x_i \pwr v) \ne y_i ]}\right]   \\ 
		& \forall v \in \Vcal, \ d_v \pwr t \gets \E{(\mathbf{x}_i, y_i ) \sim \Dcal_t} \  \E{h,h'\sim \mathcal{H}_v} \left[\Ind{[h  (x_i \pwr v) \ne h' (x_i \pwr v) ]}\right]  
		\end{align*}
		%		\State $\forall v \in \Vcal, \ \gamma_v \pwr t = Q_v \pwr t \times \rho_v \pwr t$  \Comment{final weight over view-specific voters} 
		\For{$\mathbf{x}_i \in S$}
		\State $\D_{t+1}(\mathbf{x}_i) \gets \frac{ \D_t(\mathbf{x}_i) \exp \big({-y_i \sum_{v=1}^{V} \rho_v \pwr t Q_v \pwr t h_v \pwr t (x_i \pwr v) }\big)}{ \sum_{j=1}^{n} \D_t(\mathbf{x}_j) \exp \big({-y_j \sum_{v=1}^{V} \rho_v \pwr t Q_v \pwr t h_v \pwr t (x_j \pwr v) }\big)}$
		\EndFor
		
		\EndFor
		\State \textbf{Return: } For each view $ v \in \Vcal, $ weights over view-specific voters and weights over views i.e. $\rho \pwr T$
	\end{algorithmic}
	
\end{algorithm}

Following our two-level hierarchical strategy (see Figure \ref{fig:MultiviewHierarchy}),  we aim at combining the view-specific voters (or views) leading to a well-performing multiview majority vote given by Equation \eqref{eq:mv_majority_vote}.
Boosting is  a well-known approach which  aims at combining a set of weak voters in order to build a more  efficient classifier than each of the view-specific classifiers alone.
Typically, boosting algorithms repeatedly  learn a ``weak" voter using a learning algorithm  with different probability distribution over the learning sample $S$. 
Finally, it combines all the weak voters in order to have one single strong classifier which performs better than the individual weak voters.
Therefore, we exploit boosting paradigm to derive a multiview learning algorithm $\pbboost$ (see Algorithm~\ref{alg:pbboost})  for our setting.  

For a given training set $S =\{ (\mathbf{x}_i, y_i), \dots , (\mathbf{x}_n, y_n)\}\in(\mathcal{X}\times \{-1,+1\})^n$ of size $n$; the proposed algorithm (Algorithm \ref{alg:pbboost}) maintains a distribution over the examples which is initialized as uniform. Then at each iteration, $V$ view-specific weak classifiers are learned according to the current distribution $\Dcal_t$ (Step 5), and their corresponding errors $\epsilon^t_v$ are estimated (Step 6).

Similarly to the Adaboost algorithm \cite{adaboost}, the weights of each view-specific classifier $(Q_v^t)_{1\le v\le V}$ are then computed with respect to these errors as
\begin{align*}
& \forall v \in \Vcal, Q_v \pwr t  \gets \frac{1}{2} \bigg[ \ln \bigg( \frac{1- \epsilon_v \pwr t }{\epsilon_v \pwr t} \bigg) \bigg] 
\end{align*}

% approaches, we maintain a distribution $\Dcal_t$ over the learning sample $S$ with the objective of: \textit{i)} learning the weights over the view-specific voters capturing the view-specific information (first-level of hierarchy) for each view;  and \textit{ii)} learning weights over the views which control the trade-off between the accuracy and the diversity between the views (second-level of hierarchy). 
%For each view, at each iteration the Algorithm  sets the weights over view-specific voters (Step $5$ to $7$) as follows
%where, $\epsilon_v \pwr t = \E{(\mathbf{x}_i, y_i ) \sim \Dcal_t}    \I [h_v \pwr t (x_i \pwr v) \ne y_i]$ is the classification error of the voter on the given view-specific training sample. 
%Intuitively, it captures the ability of view-specific voter to deal with the examples on corresponding view (allowing us to capture the view-specific informations). 
%Note that the AdaBoost algorithm \cite{adaboost} follows  similar approach to weight the weak voters. 
To learn the weights $(\rho_v)_{1\le v\le V}$ over the views,  we optimize the multiview C-Bound, given by Equation \eqref{eq:cbound_multiview_2} of Lemma \ref{lem:mv-cbound} (Step 8 of algorithm), which in our case writes as a constraint minimization problem
		\begin{align*}
		\mathrm{max}_{\rho} & \qquad \frac{\bigg[1 - 2 \sum_{v=1}^{V} \rho_v \pwr t r_v \pwr t \bigg]^{2} }{1- 2 \sum_{v=1}^{V} \rho_v \pwr t d_v \pwr t} \\
		s.t. & \qquad \sum_{v=1}^{V} \rho_v \pwr t =1 , \quad \rho_v \pwr t \ge 0 \quad \forall v \in \{ 1, ..., V\} 
		%		where, \  & \forall v \in \Vcal, \ d_t \pwr v = \E{(\mathbf{x}_i, y_i ) \sim \Dcal_t} \  \E{h,h'\sim \Hcal \pwr v} \I [h_t  (x_i \pwr v) \ne h'_t (x_i \pwr v) ],
		\end{align*}
where, $r_v$ is the view-specific Gibbs risk and,  $d_v$ the expected disagreement over all view-specific voters defined as follows.
\begin{align}
\label{eq:expected_gibbs_error} &  \ r_v \pwr t =  \E{(\mathbf{x}_i, y_i ) \sim \Dcal_t}  \   \E{h \sim \mathcal{H}_v} \Ind{[h  (x_i \pwr v) \ne y_i]}, \\ 
\label{eq:expected_disagreement} & \ d_v \pwr t = \E{(\mathbf{x}_i, y_i ) \sim \Dcal_t} \  \E{h,h'\sim \mathcal{H}_v} \Ind{[h  (x_i \pwr v) \ne h' (x_i \pwr v)]}.
\end{align}
Intuitively, the multiview C-Bound tries to diversify the view-specific voters and views (Equation \eqref{eq:expected_disagreement}) while controlling the classification error of the view-specific classifiers (Equation \eqref{eq:expected_gibbs_error}). 
This allows us to control the accuracy and the diversity between the views which is an important ingredient in multiview learning~\cite{emvboost,Goyal2017, Peng11,Peng17,MorvantHA14}. 

In Section \ref{sec:results}, we empirically show that our algorithm minimizes the multiview C-Bound over the iterations of the algorithm (this is theoretically justified by the generalization bound of Theorem~\ref{theo:C-Bound-Multiview}).
%Moreover, we empirically show the benefits of learning the weights over the views by optimizing the mutliview C-Bound in next Section.
Finally, we update the distribution over training examples $\mathbf{x}_i$ (Step 9), by following the Adaboost algorithm and in a way that the weights of misclassified (resp. well classified) examples by the final weighted majority classifier increase (resp. decrease).
\begin{align*}
\D_{t+1}(\mathbf{x}_i) \gets \frac{ \D_t(\mathbf{x}_i) \exp \big({-y_i \sum_{v=1}^{V} \rho_v \pwr t Q_v \pwr t h_v \pwr t (x_i \pwr v) }\big)}{ \sum_{j=1}^{n} \D_t(\mathbf{x}_j) \exp \big({-y_j \sum_{v=1}^{V} \rho_v \pwr t Q_v \pwr t h_v \pwr t (x_j \pwr v) }\big)}
\end{align*}
%Here, we maintain a single global distribution over learning sample for all the views.  We are increasing the weights of examples which are misclassified by current multiview classifier ensemble.
Intuitively, this forces the view-specific classifiers to be consistent with each other, which is important for multiview learning~\cite{janodet:hal-00403242,Koco11,Xiao12}.
Finally, after $T$ iterations of algorithm, we learn the weights over the view-specific voters and weights over the views leading to a well-performing weighted multiview majority vote
\begin{align*}
\label{eq:Multiview_MV}
B_{\rho}(\mathbf{x})= \sign \left(\sum_{v=1}^{V} \rho_v \pwr T \sum_{t=1}^{T}   Q_v \pwr t h_v \pwr t(x\pwr v) \right).
\end{align*}

\section{Experimental Results}
\label{sec:results}

In this section, we present experiments to show the potential of our algorithm $\pbboost$ on the following datasets. 

\subsection{Datasets}
\label{sec:datasets}
\subsubsection*{MNIST}
MNIST is a publicly available dataset consisting of $70,000$ images of handwritten digits distributed over ten classes~\cite{Lecun98}. 
For our experiments, we generated $2$ four-view datasets where each view is a vector of $\mathbb{R} \pwr {14 \times 14}$.
Similarly than done by ~\cite{ChenD17}, the first dataset ($\mathtt{MNIST}_1$) is generated by considering $4$ quarters of image as $4$ views.
For the second dataset ($\mathtt{MNIST}_2$) we consider $4$ overlapping views around the centre of images: this dataset brings redundancy between the views.
These two datasets allow us to check if our algorithm is able to capture  redundancy between the views.
%It is interesting to see the usefulness of our algorithm when we have redundant views of data.
%Examples of each dataset is shown in Fig  $1$ and $2$ respectively.  
%\begin{figure}[t]
%	\label{fig:mnist1}\centering
%	\includegraphics[scale=0.2]{images/eg_Dataset_1.png}
%	\caption{Example of $\mathtt{MNIST}_1$ dataset.}
%\end{figure}
%\begin{figure}[t]
%	\label{fig:mnist2}\centering
%	\includegraphics[scale=0.2]{images/eg_Dataset_2.png}
%	\caption{Example of $\mathtt{MNIST}_2$ dataset.}
%\end{figure}
We reserve $10,000$ of images as test samples and remaining as training samples. 

\subsubsection*{Multilingual, Multiview Text categorization}
This dataset is a multilingual text classification data extracted from Reuters RCV1/RCV2 corpus\footnote{\url{https://archive.ics.uci.edu/ml/datasets/Reuters+RCV1+RCV2+Multilingual,+Multiview+Text+Categorization+Test+collection}}.
It consists of more than $110,000$ documents written in five different languages (English, French, German, Italian and Spanish) distributed over six classes. 
We see different languages as different views of the data. 
We reserve $30 \%$ of documents as test samples and remaining as training data. 

\begin{table*}	
	\centering
	\resizebox{\textwidth}{!}{
		\begin{tabular}{ c| c c  c c c c c  c} 
			\hline
			\multirow{2}{*}{Strategy} &
			\multicolumn{2}{c}{\MNIST{1}} &&
			\multicolumn{2}{c}{\MNIST{2}} &&
			\multicolumn{2}{c}{\Reuters} \\
			\cline{2-3}\cline{5-6}\cline{8-9}  
			&  Accuracy & $F_1$  && Accuracy & $F_1$ && Accuracy & $F_1$  \\\hline
			$\mono$  & $.9034 \pm .001^{\downarrow}$ &  $.5353 \pm .006^{\downarrow}$  && $.9164 \pm .001^{\downarrow}$ &  $.5987 \pm .007^{\downarrow}$ && $.8420 \pm .002^{\downarrow}$ &  $.5051 \pm .007^{\downarrow}$\\
			$\concat$  & $.9224 \pm .002^{\downarrow}$ &  $.6168 \pm .011^{\downarrow}$  && $.9214 \pm .002^{\downarrow}$ &  $.6142 \pm .013^{\downarrow}$ && $.8431 \pm .004^{\downarrow}$ &  $.5088 \pm .012^{\downarrow}$\\
			
			\Fusion{dt}{}  & $.9320 \pm .001^{\downarrow}$ &  $.5451 \pm .019^{\downarrow}$  && $.9366 \pm .001^{\downarrow}$ &  $.5937 \pm .020^{\downarrow}$ && $.8587 \pm .003^{\downarrow}$ &  $.4128 \pm .017^{\downarrow}$\\
			
			$\mvmv$  & $.9402 \pm .001^{\downarrow}$ &  $.6321 \pm .009^{\downarrow}$  && $.9450 \pm .001^{\downarrow}$ &  $.6849 \pm .008^{\downarrow}$ && $.8780 \pm .002^{\downarrow}$ &  $.5443 \pm .012^{\downarrow}$\\
			
			$\rboost$ & $.9256 \pm .001^{\downarrow}$ &  $.5315 \pm .009^{\downarrow}$  && $.9545 \pm .0007$ &  $.7258 \pm .005^{\downarrow}$ && $.8853 \pm .002$ &  $.5718 \pm .011^{\downarrow}$\\
			
			$\mvAdaboost$ & $\textit{.9514} \pm .001$ &  $.6510 \pm .012^{\downarrow}$  && $\textit{.9641} \pm .0009$ &  $.7776 \pm .007^{\downarrow}$ && ${.8942} \pm .006$ &  $.5581 \pm .013^{\downarrow}$\\
			$\mvboost$ & $.9494 \pm .003^{\downarrow}$ &  $\textit{.7733} \pm .009^{\downarrow}$  && $.9555 \pm .002$ &  $\textit{.7910} \pm .006^{\downarrow}$ && $.8627 \pm .007^{\downarrow}$ &  ${.5789} \pm .012^{\downarrow}$\\

			\Fusion{Cq}{all} & $.9418 \pm .002^{\downarrow}$ &  $.6120 \pm .040^{\downarrow}$  && $.9548 \pm .003^{\downarrow}$ &  $.7217 \pm .041^{\downarrow}$ && $\textbf{.9001 }\pm .003$ &  $\textbf{.6279 }\pm .019$\\
			
			$\pbboost$   & $\textbf{.9661} \pm .0009$ &  $\textbf{.8066} \pm .005$  && $\textbf{.9674} \pm .0009$ &  $\textbf{.8166} \pm .006$ && $\textit{.8953} \pm .002$ &  $\textit{.5960} \pm .015^{\downarrow}$\\
			\hline
			%			\Fusion{SVM}{Cq} & $--$ &  $--$  && $--$ &  $--$ && $--$ &  $--$\\
			
		\end{tabular}}
		\caption{Test classification accuracy and \mbox{$F_1$-score} of different approaches averaged over all the classes and over $20$ random sets of $n=500$ labeled examples per training set. Along each column, the best result is in bold, and second one in italic. $^{\downarrow}$ indicates that a result is statistically significantly worse than the best result, according to a Wilcoxon rank sum test with $p < 0.02$.
		}
		\label{tab:results}
	\end{table*}
	
\begin{figure*}[t!]%
	\centering
	\begin{tabular}{cc}
		\hspace{-10mm}\includegraphics[scale=.205]{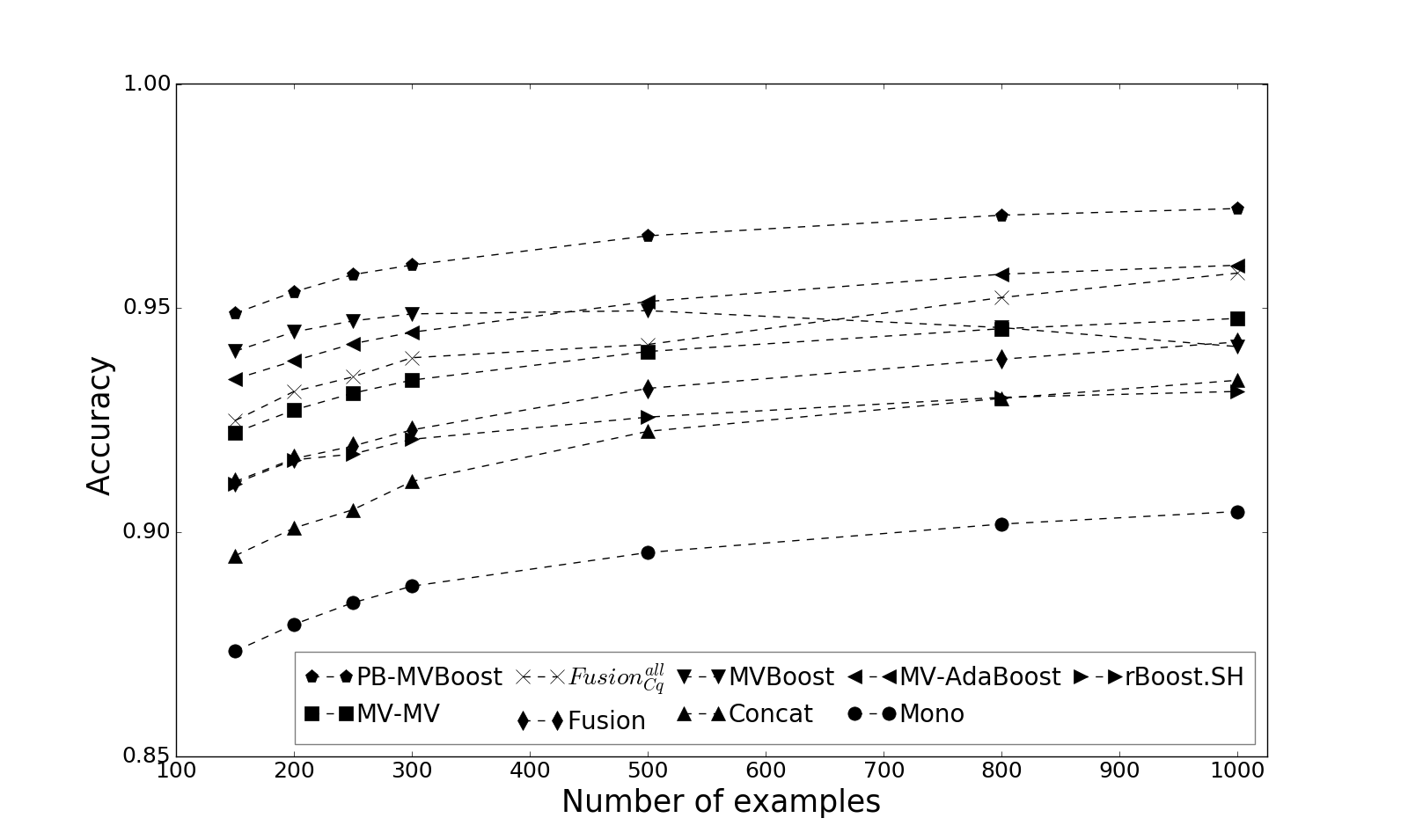}  & 
		\hspace{-1.0cm}\includegraphics[scale=.205]{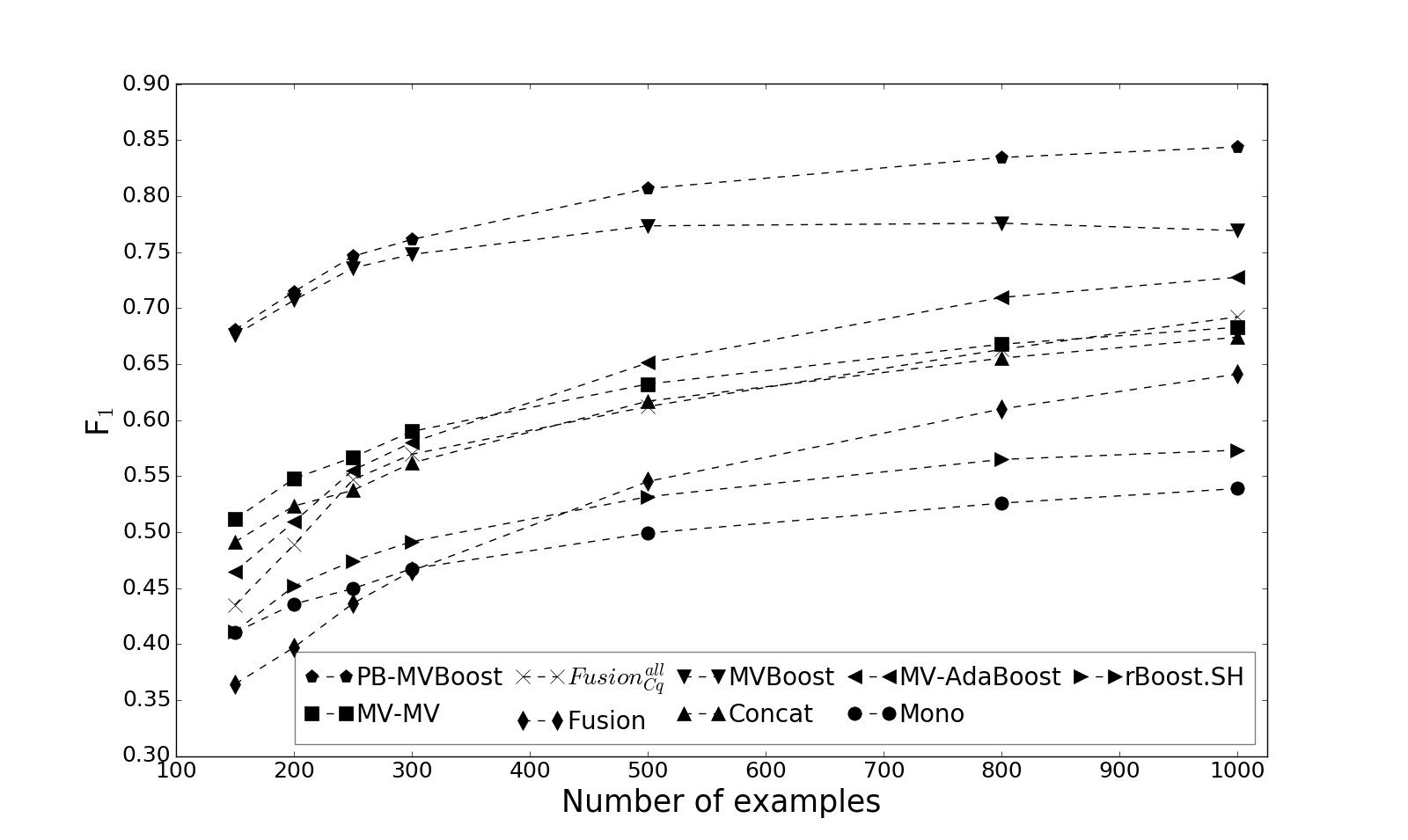}\\
		\multicolumn{2}{c}{(a) \MNIST{1}}\vspace{3mm}\\ 
		
		\hspace{-10mm}\includegraphics[scale=.205]{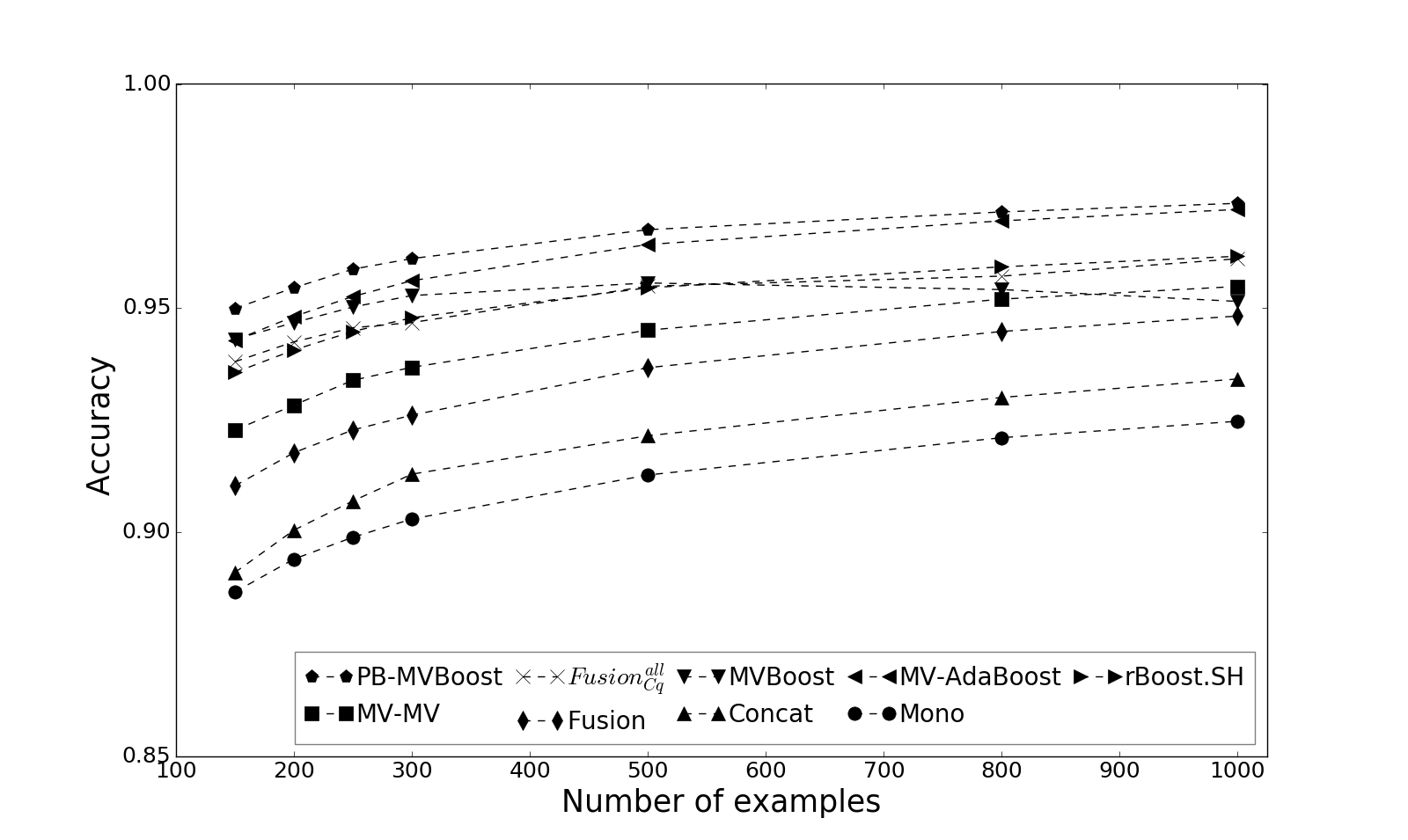}  & 
		\hspace{-1.0cm}\includegraphics[scale=.205]{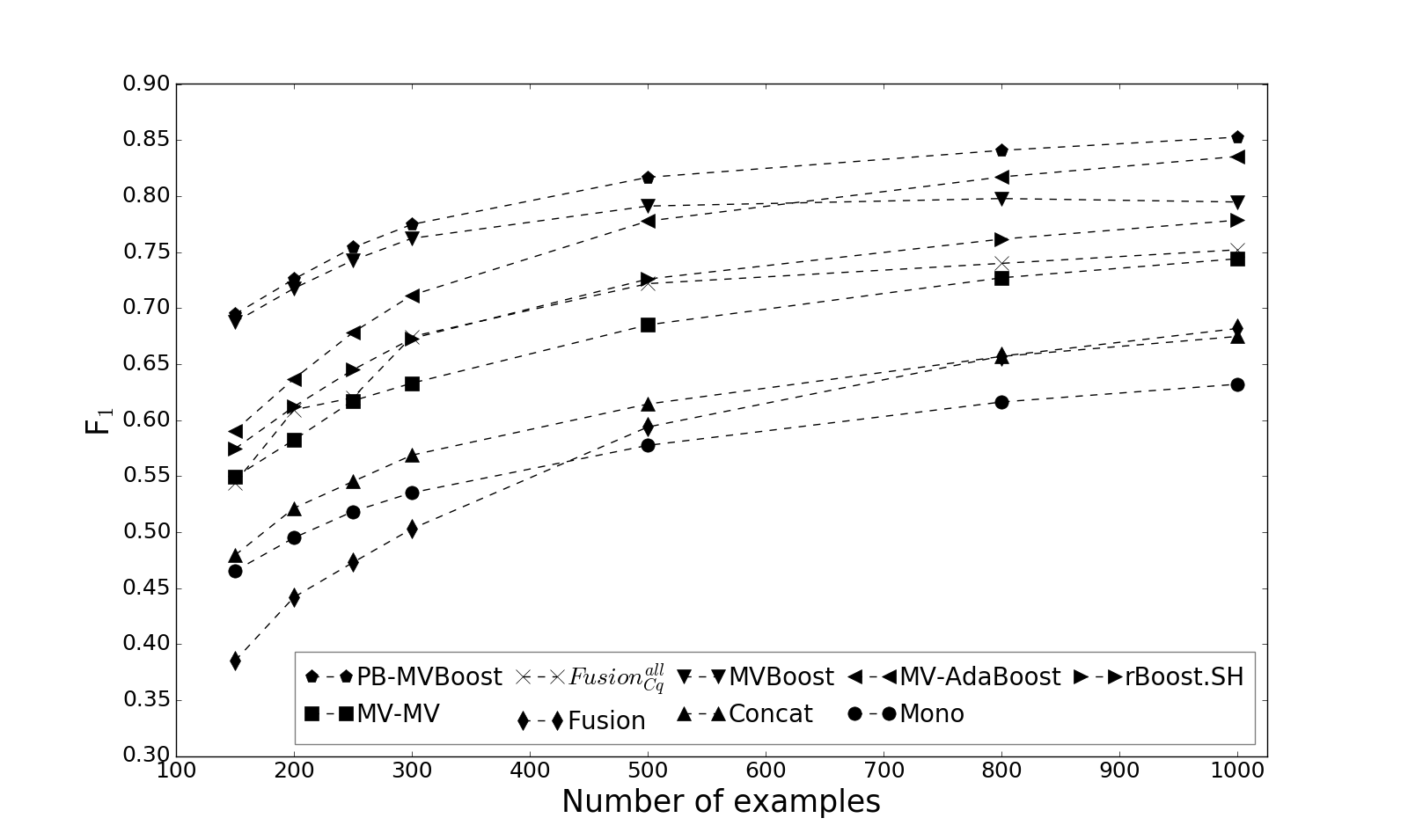} \\
		\multicolumn{2}{c}{(b) \MNIST{2}} \vspace{3mm}\\
		
		\hspace{-10mm}\includegraphics[scale=.205]{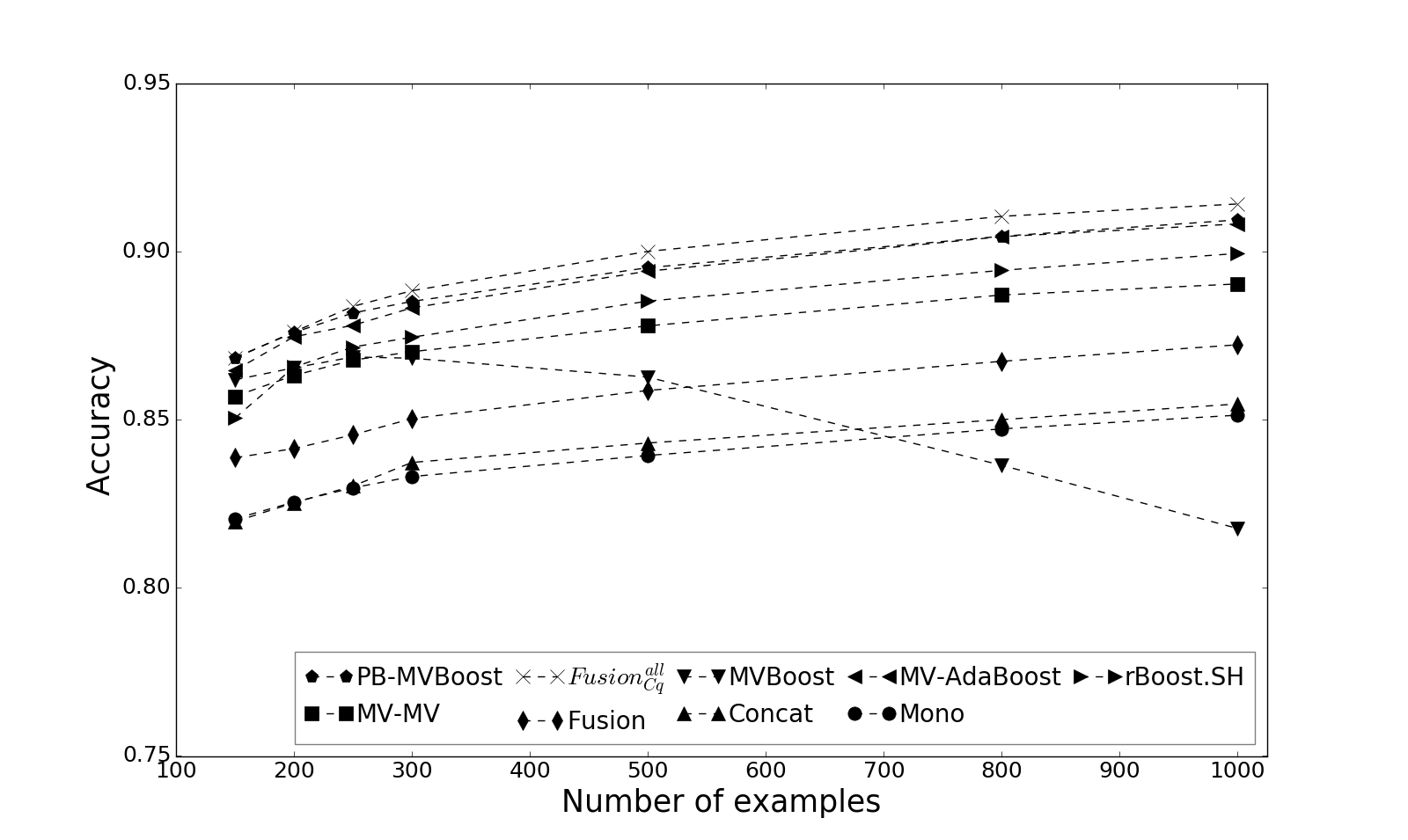}  &
		 \hspace{-1.0cm}\includegraphics[scale=.205]{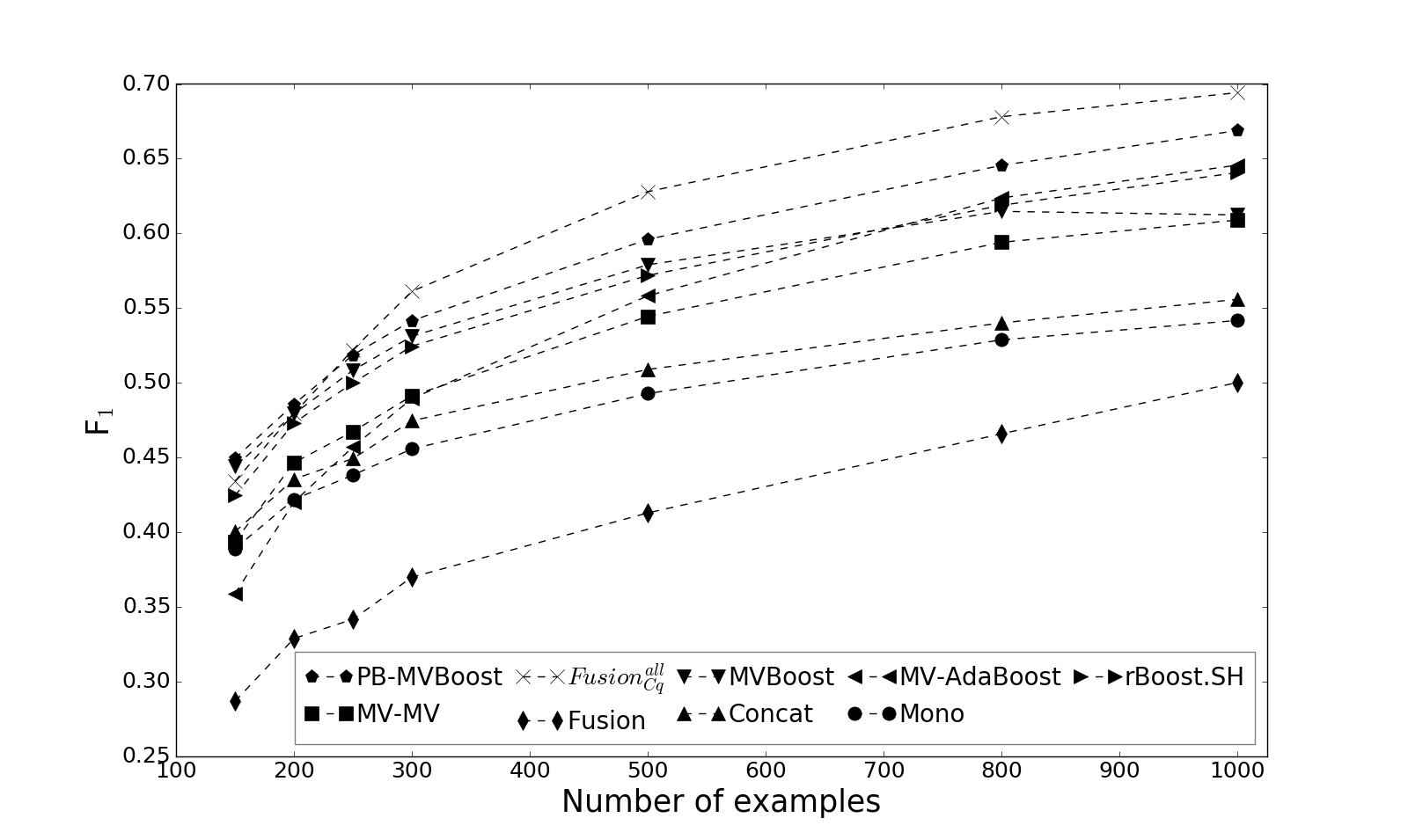} \\
		\multicolumn{2}{c}{(c) \Reuters}\\
	\end{tabular}
	\caption{Evolution of accuracy and $F_1$-measure with respect to the number of labeled examples in the initial labeled training sets on \MNIST{1}, \MNIST{2} and \Reuters{} datasets.}
	\label{fig:graphs}
\end{figure*}

\subsection{Experimental Protocol}
While the datasets are multiclass, we transformed them as  binary tasks by considering \textit{one-vs-all} classification problems: for each class we learn a binary classifier  by considering all the learning samples from that class as positive examples and the others as negative examples.
We consider different sizes of learning sample $S$ ($150$, $200$, $250$, $300$, $500$, $800$, $1000$) that are chosen randomly from the training data.
Moreover, all the results are averaged over $20$ random runs of the experiments.
Since the classes are unbalanced, we report the accuracy along with F1-measure for the methods and all the scores are averaged over all the {\it one-vs-all} classification problems.

We consider two multiview learning algorithms based on our two-step hierarchical strategy, and compare the $\pbboost$ algorithm described in Section~\ref{sec:pbboost}, with a previously developed multiview learning algorithm \cite{Goyal2017}, based on classifier late fusion approach \cite{Early-Late-ACMMultimedia05}, and referred to as \Fusion{Cq}{all}. 
Concretely, at the first level, this algorithm trains different view-specific linear SVM models with different hyperparameter $C$ values ($12$ values between $10 \pwr {-8}$ and $10 \pwr {3}$). 
And, at the second level, it learns a weighted combination over the predictions of view-specific voters using PAC-Bayesian algorithm $\texttt{CqBoost}$\cite{cqboost} with a RBF kernel. 
Note that, algorithm $\texttt{CqBoost}$ tends to minimize the PAC-Bayesian C-Bound \cite{GermainLLMR15} controlling the trade-off between accuracy and disagreement among voters.
The hyperparameter $\gamma$ of the RBF kernel is chosen over a set of $9$ values between $10 \pwr {-6}$ and $10 \pwr {2}$; and hyperparameter $\mu$ is chosen over a set of $8$ values between $10 \pwr {-8}$ and $10 \pwr {-1}$. 
To study the potential of our algorithms (\Fusion{Cq}{all} and $\pbboost$), we considered following $7$ baseline approaches:
  \begin{itemize}
  	\item $\mono $: We learn a view-specific model for each view using a decision tree classifier and report the results of the best performing view.
  	\item $\concat$: We learn one model using a decision tree classifier by concatenating features of all the views.
  	
  	\item \Fusion{dt}{}: This is a late fusion approach where we first learn the view-specific classifiers using $60 \%$ of learning sample. Then, we learn a final multiview weighted model over the  predictions of the view-specific classifiers.  For this approach, we used decision tree classifiers at both levels of learning.
  
  	\item $\mvmv$:  We compute a multiview uniform majority vote (similar to approach followed by \cite{Massih09}) over all the view-specific classifiers' outputs in order to make final prediction. We learn view-specific classifiers using decision tree classifiers.
  	
  	\item $\texttt{rBoost.SH}$: This is the multiview learning algorithm proposed by \cite{Peng11,Peng17} where a single global distribution is maintained over the learning sample for all the views and the  distribution over views are updated using multiarmed bandit framework. 
%  	 At each iteration of algorithm, they select a view probabilistically to learn a view-specific classifier by casting learning problem in multiarmed bandit framework. 
	  At each iteration, $\texttt{rBoost.SH}$ selects a view according to the current distribution and learns the corresponding view-specific voter.
          For tuning the parameters, we followed the same experimental setting as \cite{Peng17}.
  	 \item $\mvAdaboost$:  This is a majority vote classifier over the view-specific voters trained using Adaboost algorithm. 
  	 Here, our objective is to see the effect of maintaining separate distributions for all the views.
  	 
  	 \item $\mvboost$:  This is a variant of our algorithm $\pbboost$ but without learning weights over views by optimizing multiview C-Bound. Here, our objective is to see the effect of learning weights over views on multiview learning. 

  \end{itemize}
  
For all boosting based approaches ($\rboost$, $\mvAdaboost$, $\mvboost$ and  $\pbboost$), we learn the view-specific voters using a decision tree classifier with depth $2$ and $4$ as a weak classifier for \MNIST{}, and \Reuters{} RCV1/RCV2 datasets respectively. 
For all these approaches, we kept number of iterations $T=100$. 
For optimization of multiview C-Bound, we used Sequential Least SQuares Programming (SLSQP) implementation provided  by SciPy\footnote{\url{https://docs.scipy.org/doc/scipy/reference/optimize.minimize-slsqp.html}} \cite{SciPy}~and the decision trees implementation from scikit-learn~\cite{scikit-learn}.

\subsection{Results}

\begin{figure*}%
	\centering
	\includegraphics[scale=.3]{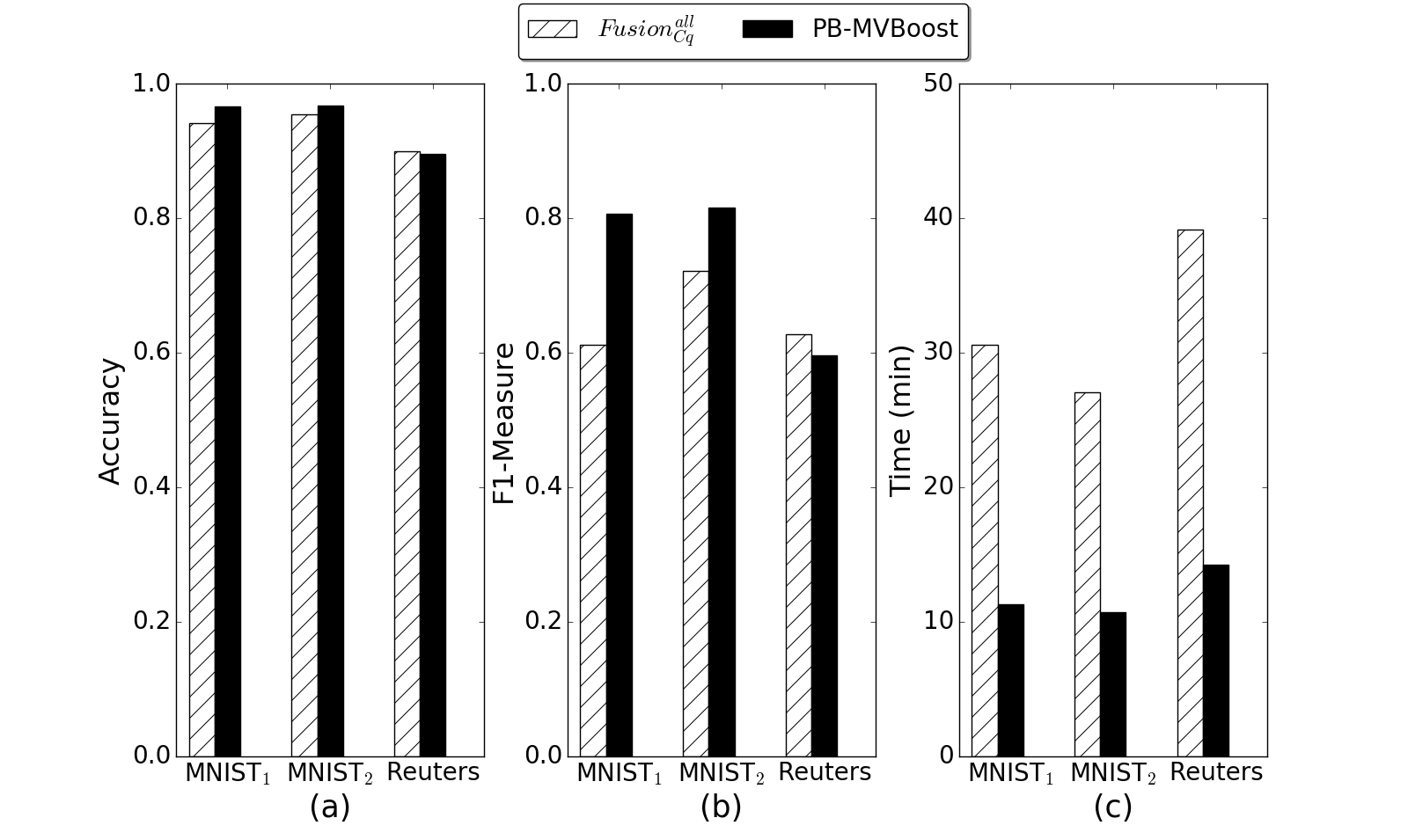}\\
	\caption{Comparison between \Fusion{Cq}{all} and $\pbboost$ in terms Accuracy (a), F1-Measure (b) and Time Complexity (c) for $n =500$ }
	\label{fig:histograms}
\end{figure*}

\begin{figure} % "[t!]" placement specifier just for this example
	
	\begin{subfigure}{1.01\textwidth}
		\centering\includegraphics[scale=0.26]{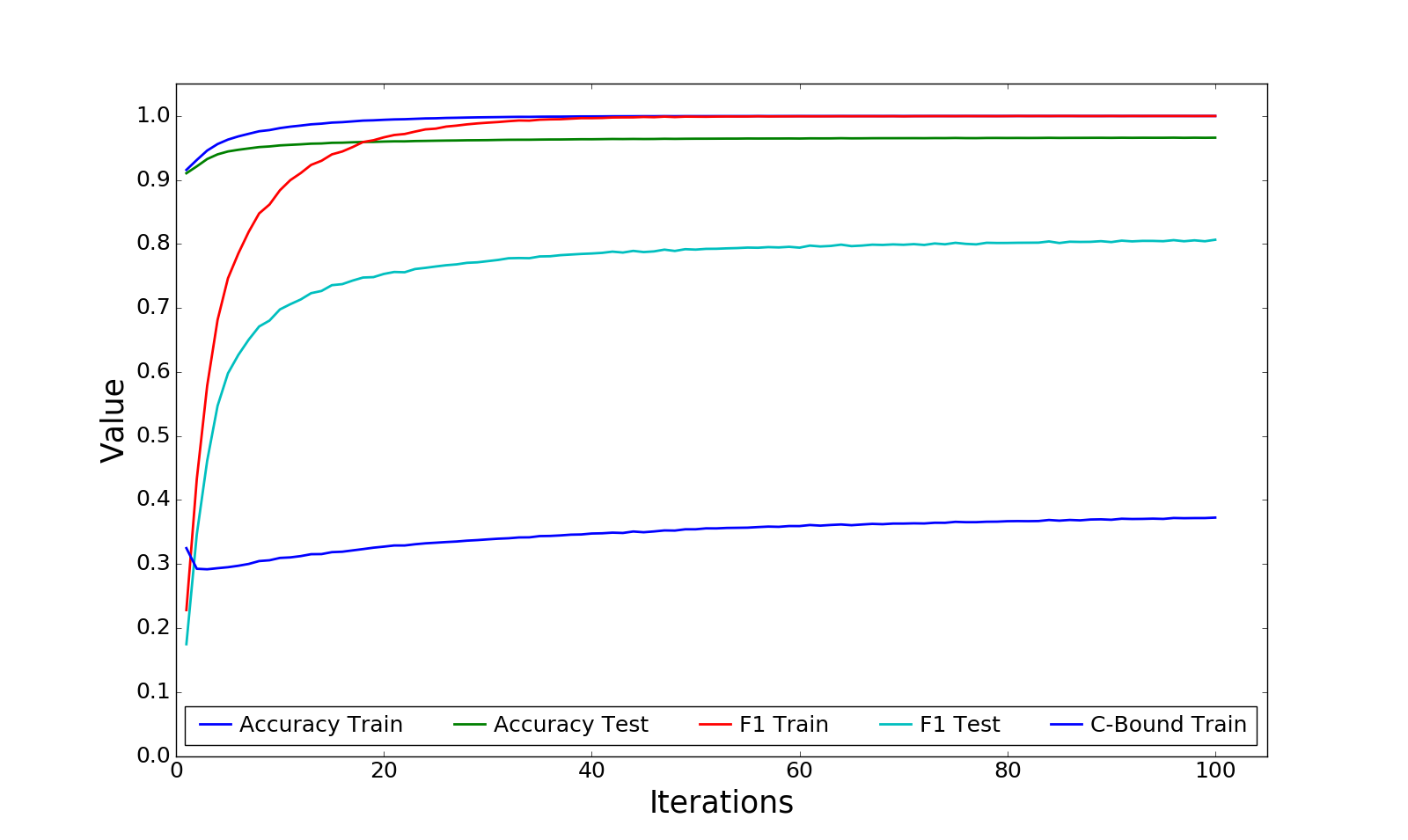}
		\caption{ \MNIST{1}} \label{fig:a}
	\end{subfigure}\hspace*{\fill}
	
	\begin{subfigure}{1.01\textwidth}
		\centering\includegraphics[scale=0.26]{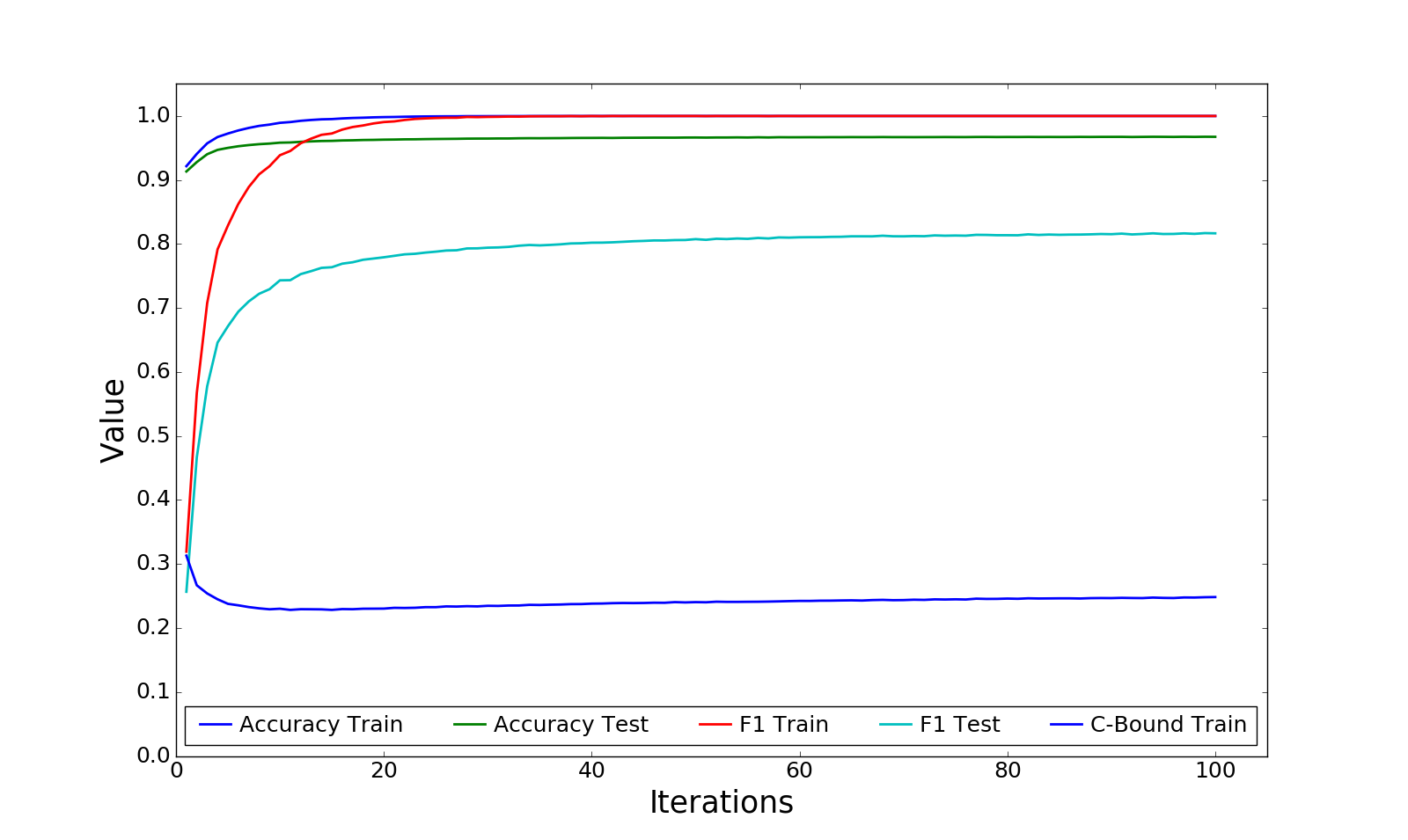}
		\caption{ \MNIST{2}} \label{fig:b}
	\end{subfigure}\hspace*{\fill}
	
	\begin{subfigure}{1.0\textwidth}
		\centering \includegraphics[scale=0.26]{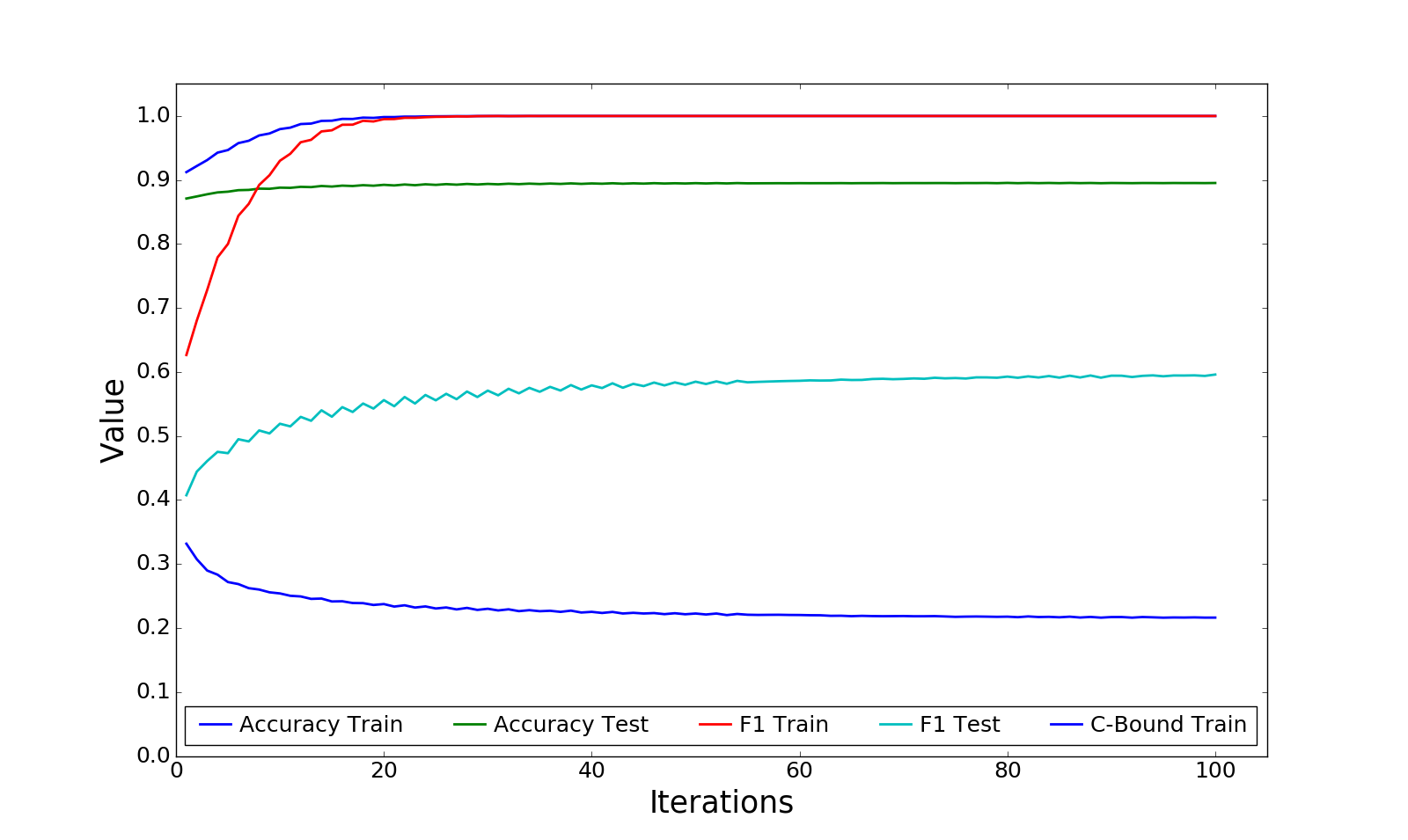}
		\caption{\Reuters{}} \label{fig:c}
	\end{subfigure}\hspace*{\fill}	
	\caption{Plots for classification error and F1-measure  on training and test data; and  empirical multiview C-Bound on training data over the iterations for all datasets with $n=500$.} \label{fig:plots}
\end{figure}

Firstly, we report the comparison of our algorithms \Fusion{Cq}{all} and $\pbboost$ (for $m=500$) with all the considered baseline methods in Table~\ref{tab:results}.
Secondly, Figure \ref{fig:graphs}, illustrates the evolution of the performances according to the size of the learning sample.
From the table, proposed two-step learning algorithm \Fusion{Cq}{all} is significantly better than the baseline approaches for \Reuters{} dataset.
Whereas, our boosting based algorithm $\pbboost$ is significantly better than all the baseline approaches for all the datasets.
This shows that considering a two-level hierarchical strategy in a PAC-Bayesian manner is an effective way to handle multiview learning.

In Figure  \ref{fig:histograms}, we compare proposed algorithms \Fusion{Cq}{all} and $\pbboost$ in terms of accuracy, $F_1$-score and time complexity for $m=500$ examples. 
For \MNIST{} datasets, $\pbboost$ is significantly better than \Fusion{Cq}{all}.
For \Reuters{} dataset,  \Fusion{Cq}{all} performs better than $\pbboost$  but computation time for \Fusion{Cq}{all} is much higher 
than that of $\pbboost$. 
Moreover, in Figure \ref{fig:graphs}, we can see that the performance (in terms of $F_1$-score) for  \Fusion{Cq}{all} is worse than $\pbboost$ when we have less training examples ($n=150 \mbox{ and } 200$).
This shows the proposed boosting based one-step algorithm $\pbboost$ is more stable and more effective for multiview learning.

From Table \ref{tab:results} and Figure \ref{fig:graphs}, we can observe that  $\mvAdaboost$ (where we have different distributions for each view over the learning sample) provides  better results compared to  other baselines in terms of accuracy but not in terms of F1-measure. 
On the other hand, $\mvboost$ (where we have single global distribution over the learning sample but without learning weights over views) is  better compared to other baselines in terms of F1-measure.
Moreover, the performances of $\mvboost$ first increases with an increase of the quantity of the training examples, then decreases.
Whereas our algorithm $\pbboost$ provides the best results in terms of both accuracy and F1-measure, and leads to a  monotonically increase of the performances with respect to the addition of labeled examples. 
This confirms that by maintaining a single global distribution over the views and learning the weights over the views using a PAC-Bayesian framework, we are able to take advantage of different representations (or views) of the data.

Finally, we plot behaviour of our algorithm $\pbboost$ over $T=100$ iterations on Figure~\ref{fig:plots} for all the datasets.
We plot accuracy and F1-measure of learned models on training and test data along with empirical multiview C-Bound on training data  at each iteration of our algorithm.
Over the iterations, the F1-measure on the test data keeps on increasing for all the datasets even if F1-measure  and accuracy on the training data reach the maximal value.
This confirms that our algorithm handles unbalanced data well. 
Moreover, the empirical multiview C-Bound (which controls the trade-off between accuracy and diversity between views) keeps on decreasing over the iterations.
This validates that by combining the PAC-Bayesian framework with the boosting one, we can empirically ensure the view specific information and diversity between the views for multiview learning.

\section{Conclusion}
\label{sec:conclusion}
In this paper, we provide a PAC-Bayesian analysis for a two-level hierarchical multiview learning approach with more than two views, when the model takes the form of a weighted majority vote over a set of functions/voters.
We consider a hierarchy of weights modelized by distributions where for each view we aim at learning \textit{i)}  posterior $Q_v$  distributions over the view-specific voters capturing the view-specific information and \textit{ii)}   hyper-posterior $\rho_v$  distributions over the set of the views.
Based on this strategy, we derived a general multiview  PAC-Bayesian theorem that can be specialized to any convex function to compare the empirical and true risks of the stochastic multiview Gibbs classifier.
We propose a boosting-based learning algorithm, called as $\pbboost$.
At each iteration of the algorithm,  we learn the weights over the view-specific voters and the weights over the views by optimizing an upper-bound over the risk of the majority vote (the multiview C-Bound) that has the advantage to allow to control a trade-off between accuracy and the diversity between the views.
The empirical evaluation shows that $\pbboost$ leads to good performances and confirms that our two-level PAC-Bayesian strategy is indeed a nice way to tackle multiview learning.
%We empirically evaluate our algorithm $\mvboost$ on \MNIST{1}, \MNIST{2} and multilingual \Reuters{} RCV1/RCV2 datasets.
%We show that $\pbboost$ minimizes empirical multiview C-Bound over the iterations of algorithm. 
%Therefore, it controls the trade-off  between accuracy and diversity between views which is a important factor for multiview learning.
%Moreover, our algorithm handles the unbalanced data well as over the iterations $\pbboost$ increases the F1-measure on test data even if the F1-measure and the accuracy on the train data reaches to maximum i.e. $1$.
Moreover, we compare  the effect of maintaining separate distributions over the learning sample for each view; single global distribution over views; and single global distribution along with learning weights over views on results of multiview learning.
We show that by maintaining a single global distribution over the learning sample for all the views and learning the weights over the  views is an effective way to deal with multiview learning. 
In this way, we are able to capture the view-specific information and control the diversity between the views.
Finally, we compare $\pbboost$ with a two-step learning algorithm \Fusion{Cq}{all} which is based on PAC-Bayesian theory.
We show that $\pbboost$ is more stable and computationally faster than  \Fusion{Cq}{all}.

For future work, we would like to specialize our PAC-Bayesian generalization bounds to linear classifiers \cite{GermainLLM09} which will clearly open the door to derive theoretically founded multiview learning algorithms.
We would also like to extend our algorithm to \textit{semi-supervised} multiview learning where one has access to an additional unlabeled data during training. 
One possible way is to learn a view-specific voter using pseudo-labels (for unlabeled data) generated from the voters trained from other views (as done for example in \cite{Xu16}).
Another possible direction is to make use of unlabeled data while computing view-specific disagreement for optimizing multiview C-Bound. This clearly opens the door to derive theoretically founded algorithms for \textit{semi-supervised} multiview learning using PAC-Bayesian theory.
We would like to extend our algorithm to transfer learning setting where training and test data are drawn from different distributions.
An interesting direction would be to bind the data distribution to the different views of the data, as in some recent zero-shot learning approaches \cite{Socher13}.
Moreover, we would like to extend our work to the case of missing views or incomplete views e.g. \cite{Massih09}  and  \cite{xu15}.
One possible solution is to learn the view-specific voters using available view-specific training examples and adapt the distribution over the learning sample accordingly.

\section*{Acknowledgements}
This work was partially funded by the French ANR project LIVES ANR-15-CE23-0026-03 and the ``R\'egion  Rh\^{o}ne-Alpes''.

\appendix

\section*{Appendix}

\section{Mathematical Tools}
\label{sec:Appendix}

\begin{theorem}[Markov's ineq.]
	\label{theo:markov}
	For any random variable $X$ {\it s.t.} $\mathbb{E}(|X|)\!=\! \mu$, for any $a\!>\!0$, we  have $\displaystyle \mathbb{P}(|X| \ge a) \le \frac{\mu}{a}.$
\end{theorem}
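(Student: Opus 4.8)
The plan is to derive the bound from the elementary observation that $|X|$ dominates a suitably scaled indicator of the tail event $\{|X|\ge a\}$, combined with the monotonicity of the expectation and the identity relating the expectation of an indicator to a probability.

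First I would fix $a>0$ and introduce the indicator random variable $\Ind{[|X|\ge a]}$, which equals $1$ on the event $\{|X|\ge a\}$ and $0$ elsewhere. The key pointwise inequality I would establish is
\[
a\,\Ind{[|X|\ge a]} \ \le\ |X|,
\]
valid for every outcome. This follows by a two-case argument: on $\{|X|\ge a\}$ the left-hand side equals $a$, which is at most $|X|$ by the very definition of the event; on the complementary event the left-hand side is $0$, which is at most $|X|$ because $|X|\ge 0$ always.

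Next I would take expectations on both sides. Using linearity and monotonicity of the expectation together with $\mathbb{E}(|X|)=\mu$, this yields $a\,\mathbb{E}\big(\Ind{[|X|\ge a]}\big)\le \mu$. Since the expectation of an indicator equals the probability of its underlying event, $\mathbb{E}\big(\Ind{[|X|\ge a]}\big)=\mathbb{P}(|X|\ge a)$, so that $a\,\mathbb{P}(|X|\ge a)\le\mu$. Dividing through by $a>0$ — which is legitimate precisely because the hypothesis requires $a>0$ — gives the claimed bound $\mathbb{P}(|X|\ge a)\le\mu/a$.

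Being a classical result, this argument presents no genuine obstacle; the only points deserving care are the pointwise domination, handled by the case split above, and the positivity of $a$ needed to divide, both of which are immediate from the hypotheses.
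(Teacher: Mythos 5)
Your proof is correct and is the canonical argument: the pointwise domination $a\,\Ind{[|X|\ge a]}\le |X|$, monotonicity of expectation, and division by $a>0$. The paper states Markov's inequality without proof, as a classical tool in its appendix, so there is no in-paper proof to diverge from; your argument is exactly the standard one that would be supplied.
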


\begin{theorem}[Jensen's ineq.]
\label{theo:jensen}
For any random variable $X$, for any concave function $g$, we have $\displaystyle g(\Esp [X]) \ \ge\  \Esp [g(X)].$
\end{theorem}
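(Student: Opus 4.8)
The statement is the classical Jensen inequality for concave functions, and the plan is to prove it via the supporting-line (tangent) characterization of concavity. Write $\mu = \Esp[X]$, which we may assume finite and lying in the interior of the domain of $g$, and assume $g(X)$ is integrable so that both sides are well defined. The single governing idea is that a concave function lies \emph{below} each of its supporting lines: at the point $\mu$ there exists a slope $a\in\mathbb{R}$ such that the affine map $\ell(x)=g(\mu)+a\,(x-\mu)$ satisfies $g(x)\le \ell(x)$ for every $x$ in the domain of $g$.

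Concretely, first I would fix $\mu=\Esp[X]$ and invoke the existence of a supporting line at $\mu$. If $g$ is differentiable this is immediate with $a=g'(\mu)$, since concavity gives $g(x)\le g(\mu)+g'(\mu)\,(x-\mu)$. Second, I substitute the random variable $X$ for $x$ in this pointwise inequality, obtaining $g(X)\le g(\mu)+a\,(X-\mu)$ almost surely. Third, I take expectations on both sides and use linearity together with $\Esp[X-\mu]=0$:
\[
\Esp[g(X)]\ \le\ g(\mu)+a\,\big(\Esp[X]-\mu\big)\ =\ g(\mu)\ =\ g\big(\Esp[X]\big),
\]
which is exactly the claimed inequality.

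The main obstacle is justifying the existence of the supporting line in full generality, \emph{without} assuming $g$ differentiable. Here I would rely on the standard fact that a concave function on an interval has finite one-sided derivatives at every interior point, and that any slope $a$ lying between the right derivative $g'_+(\mu)$ and the left derivative $g'_-(\mu)$ yields a valid supporting line, so that concavity guarantees $g(x)\le g(\mu)+a\,(x-\mu)$ on the whole domain. The remaining technicalities are routine: boundary values of $\mu$ are handled by a monotone/continuity limiting argument, and the integrability of $g(X)$ needed for the expectation step follows from the affine bound itself, since $g(X)$ is dominated above by an integrable affine function of $X$. An alternative route, which I would keep in reserve, is to prove the finite discrete case by induction on the two-point definition of concavity and then pass to general $X$ by approximation; but the supporting-line argument treats an arbitrary integrable $X$ directly and is the cleaner path.
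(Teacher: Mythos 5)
The paper states this Jensen inequality only as a background tool in its appendix and gives no proof of it at all, so there is no in-paper argument to compare yours against. On its own merits, your supporting-line proof is the standard argument and is correct: fix $\mu=\Esp[X]$, take any slope $a$ between the one-sided derivatives $g'_-(\mu)$ and $g'_+(\mu)$ (finite at interior points for a concave function on an interval), use the pointwise bound $g(x)\le g(\mu)+a\,(x-\mu)$, substitute $X$, and take expectations. Your treatment of the technical points is also sound, in particular the observation that the affine majorant itself supplies the one-sided integrability of $g(X)$ needed to make $\Esp[g(X)]$ well defined, and the limiting argument when $\mu$ sits on the boundary of the domain.

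One remark on scope rather than correctness: the paper invokes Theorem~\ref{theo:jensen} not only for the univariate concave function $\ln$ (proof of Lemma~\ref{lem:change}) and for pushing a square inside an expectation (proof of Lemma~\ref{lem:mv-cbound}), but also, in the proof of Theorem~\ref{theo:MVPB1}, for a convex function $D:[0,1]\times[0,1]\to\mathbb{R}$ of a \emph{pair} of random variables. Your argument via one-sided derivatives is intrinsically one-dimensional; to cover that bivariate use you would replace the supporting line by a supporting hyperplane of the hypograph of $g$ at $\Esp[X]$ (a consequence of the separating hyperplane theorem for the convex hypograph), after which the substitute-and-take-expectations step goes through verbatim. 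This is a routine strengthening and not a gap in what you proved for the one-dimensional statement as written; your fallback route (two-point concavity, induction to finite convex combinations, then approximation) would likewise extend, but the hyperplane version is the cleaner way to match every use the paper makes of the inequality.
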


\begin{theorem}[Cantelli-Chebyshev ineq.]
\label{theo:chebyshev}
For any random variable $X$ {\it s.t.} $\mathbb{E}(X)\!=\! \mu$ and $\mathbf{Var}(X)=\sigma^2$, and for any $a\!>\!0$, we  have
$\mathbb{P}(X - \mu \ge a) \le \frac{\sigma^2}{\sigma^2 + a^2}.$
\end{theorem}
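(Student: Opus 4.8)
The plan is to establish Cantelli's one-sided bound by the classical shift-and-square trick, which reduces the claim to Markov's inequality (Theorem~\ref{theo:markov}), already available earlier in the appendix. First I would center the variable by setting $Y = X - \mu$, so that $\mathbb{E}[Y] = 0$ and $\mathbf{Var}(Y) = \mathbb{E}[Y^2] = \sigma^2$; the goal then becomes bounding $\mathbb{P}(Y \ge a)$.

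The key idea is to introduce a free shift parameter $t \ge 0$. Since $a > 0$ we have $a + t > 0$, so on the event $\{Y \ge a\}$ the quantity $Y + t$ is positive and at least $a + t$, whence $\{Y \ge a\} \subseteq \{(Y+t)^2 \ge (a+t)^2\}$. Applying Markov's inequality to the non-negative random variable $(Y+t)^2$ with threshold $(a+t)^2$ then gives $\mathbb{P}(Y \ge a) \le \mathbb{E}[(Y+t)^2] / (a+t)^2$. Expanding the numerator and using $\mathbb{E}[Y] = 0$ yields $\mathbb{E}[(Y+t)^2] = \sigma^2 + t^2$, so for every $t \ge 0$ I obtain the family of bounds $\mathbb{P}(Y \ge a) \le (\sigma^2 + t^2)/(a+t)^2$.

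The last step is to optimize this family over $t$. Differentiating $f(t) = (\sigma^2 + t^2)/(a+t)^2$ shows that the numerator of $f'(t)$ is proportional to $ta - \sigma^2$, so the unique stationary point is $t^\star = \sigma^2/a > 0$, which is readily checked to be the minimizer. Substituting $t = t^\star$ and simplifying collapses the expression to $\sigma^2/(\sigma^2 + a^2)$, which is exactly the stated bound. The only genuinely clever ingredient is the shift-and-square step: a naive application of Markov to $Y^2$ alone would recover only the symmetric Chebyshev bound $\sigma^2/a^2$, and it is precisely the freedom to shift by $t$ and then optimize that sharpens the estimate to the one-sided Cantelli form. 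The remaining work is routine single-variable calculus, so I do not anticipate any real obstacle beyond the bookkeeping in the final simplification.
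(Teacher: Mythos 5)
Your proof is correct: the centering, the event inclusion $\{Y \ge a\} \subseteq \{(Y+t)^2 \ge (a+t)^2\}$ for $t \ge 0$, the application of Markov's inequality (Theorem~\ref{theo:markov}) to the non-negative variable $(Y+t)^2$, the computation $\mathbb{E}[(Y+t)^2] = \sigma^2 + t^2$, and the optimization at $t^\star = \sigma^2/a$ all check out, and the final simplification indeed yields $\sigma^2/(\sigma^2+a^2)$. Note that the paper itself states the Cantelli--Chebyshev inequality as a standard tool in its appendix without proof (it is only \emph{used}, in Appendix B, to derive the multiview C-bound), so there is no in-paper argument to compare against; your shift-and-square derivation is the classical proof of this result and rests solely on Markov's inequality, which is already available as Theorem~\ref{theo:markov}, making it a self-contained and appropriate complement to the paper's toolbox.
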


%\begin{lemma}[H\"{o}lder's inequality]
%	\label{theo:holder}
%	For all $p,q >0$ such that $\frac{1}{p}+\frac{1}{q}=1$, we have:\\[1mm]
%	\centerline{$\displaystyle \int_{a}^{b}\big|f(x)g(x)\big|dx \le \bigg[ \int_{a}^{b} \big|f(x)\big|^p dx \bigg]^{\frac{1}{p}} \bigg[ \int_{a}^{b} \big|g(x)\big|^q dx \bigg]^{\frac{1}{q}}.$}
%\end{lemma}

\section{Proof of $\mathcal{C}$-Bound for Multiview Learning (Lemma 1)}
\label{sec:proof_cbound}

In this section, we present the proof of Lemma~\ref{lem:mv-cbound}, inspired by the proof provided by \cite{GermainLLMR15}.
%$\mathcal{C}$-Bound \cite{GermainLLMR15,Lacasse06,MinCQ} for multiview learning setting.The $\mathcal{C}$-bound is based on the margin of the majority vote as a random variable. 
Firstly, we need to define the margin of the multiview weighted majority vote $B_\rho$ and its first and second statistical moments.
\begin{definition}
	\label{def:margin}
	Let $M_{\rho}$ is a random variable that outputs the margin of the multiview weighted  majority vote on the example $(\xbf,y)$ drawn from distribution $\mathcal{D}$, given by:
	$$M_{\rho} (\xbf,y) = \E{v \sim \hyperposterior} \  \E{h \sim \posterior_v} y\, h(x^v). $$
	The first and second statistical moments of the  margin are respectively given by
	\begin{equation}
	\label{eq:FirstMoment}
	\mu_{1}(M_{\rho}^{\mathcal{D}}) =  \underset{(\xbf,y) \sim \mathcal{D}}{\mathbf{E}} M_{\rho} (\xbf,y). 
	\end{equation}
	and, 
	\begin{align}
	\label{eq:SecondMoment}
	\nonumber \mu_2 (M_{\rho}^{\mathcal{D}}) & =  \underset{(\xbf,y) \sim \mathcal{D}}{\mathbf{E}} \big[ M_{\rho} (\xbf,y) \big]^2 \\
	&= \E{\xbf \sim \mathcal{D}_{X}}  \  y^2\,\Big[ \E{v \sim \hyperposterior} \ \E{h \sim \posterior_v} h(x^v) \Big]^2 =\E{\xbf \sim \mathcal{D}_{X}} \Big[ \E{v \sim \hyperposterior} \ \E{h \sim \posterior_v} h(x^v) \Big]^2.
	\end{align}
\end{definition}
According to this definition, the risk of the multiview weighted majority vote can be rewritten as follows:
%Then, the  error on majority vote is given by:
\begin{align*}
%\label{eq:bmargin}
R_{\mathcal{D}}(\bqmv) = \underset{(\xbf,y) \sim \mathcal{D}}{\mathbb{P}}  \big( M_{\rho} (\xbf,y) \le 0 \big).
\end{align*}

Moreover, the risk of the multiview Gibbs classifier can be expressed thanks to the first statistical moment of the margin.
Note that in the binary setting where $y\in\{-1,1\}$ and $h:\Xcal\to\{-1,1\}$, we have 
$\Ind{[h(x^v) \neq y]} = \frac12 (1-y\,h(x^v))$, and therefore
\begin{align}
R_{\mathcal{D}}(G_{\rho}) 
&  \nonumber 
= \E{(\mathbf{x},y) \sim \mathcal{D}} \ \E{ v \sim \hyperposterior} \ \E{h \sim \posterior_v} \Ind{[h(x^v) \neq y]} \\
& \label{eq:rrr}
= \frac{1}{2}\bigg(1- \E{(\xbf,y) \sim \mathcal{D}} \  \E{v \sim \hyperposterior} \ \E{h \sim \posterior_v} y\,h(x^v) \bigg)  \\
& \nonumber
= \frac{1}{2}(1-\mu_{1}(M_{\rho}^{\mathcal{D}}))\,.
\end{align}
Similarly, the expected disagreement can be expressed thanks to the second statistical moment of the margin by
\begin{align}
\dmv 
&  \nonumber
=  \Esp_{\xbf \sim \mathcal{D}_{\mathcal{X}}} \Esp_ {v \sim \hyperposterior} \Esp_{v' \sim \hyperposterior}   \Esp_{h \sim \posterior_v}  \Esp_{h' \sim \posterior_{v'}} \Ind{[ h(x^v) {\ne} h'(x^{v'})]}\\
&  \nonumber 
= \frac{1}{2} \bigg( 1- \Esp_{\xbf \sim \mathcal{D}_{\mathcal{X}}} \Esp_ {v \sim \hyperposterior} \Esp_{v' \sim \hyperposterior}   \Esp_{h \sim \posterior_v}  \Esp_{h \sim \posterior_{v'}}  h(x^v) \times h'(x^{v'})  \bigg) \\
&  \nonumber
= \frac{1}{2} \bigg( 1- \Esp_{\xbf \sim \mathcal{D}_{\mathcal{X}}} \Big[ \Esp_ {v \sim \hyperposterior}   \Esp_{h \sim \posterior_v}   h(x^v) \Big] \times \Big[ \Esp_{v' \sim \hyperposterior}  \Esp_{h' \sim \posterior_{v'}}  h'(x^{v'}) \Big] \bigg) \\
& \label{eq:ddd}
= \frac{1}{2} \bigg( 1- \E{\xbf \sim \mathcal{D}_{\mathcal{X}}}\bigg[  \E{v \sim \hyperposterior} \, \E{h \sim \posterior_v} h(x^v) \bigg]^2  \bigg) \\
& \nonumber
= \frac{1}{2}(1-\mu_{2}(M_{\rho}^{\mathcal{D}}))\,.
\end{align}
From above, we can easily deduce that $0 \le \dmv \le 1/2$ as $ 0 \le \mu_{2}(M_{\rho}^{\mathcal{D}}) \le 1$. Therefore, the variance of the margin can be written as:% in terms of first and second moments is given by:
\begin{equation}
\label{VarianceMargin}
\begin{split}
\text{Var}(M_{\rho}^{\mathcal{D}}) & =   \underset{(\xbf,y) \sim \mathcal{D}}{\textbf{Var}} (M_{\rho} (\xbf,y)) \\
& = \mu_2 (M_{\rho}^{\mathcal{D}}) - (\mu_1 (M_{\rho}^{\mathcal{D}}))^2.
\end{split}
\end{equation}

\subsection*{The proof of the  $\mathcal{C}$-bound} % for Multiview Learning}

\begin{proof}
	By making use of one-sided Chebyshev inequality (Theorem \ref{theo:chebyshev} of \ref{sec:Appendix}), with $X=-M_{\rho} (\xbf,y)$, $\mu= \E{(\xbf,y) \sim \mathcal{D} } (M_{\rho} (\xbf,y))$ and $a= \E{(\xbf,y) \sim \mathcal{D}} M_{\rho} (\xbf,y)$, we have 
	\begin{align*}
	R_{\mathcal{D}}(B_{\hyperposterior})  & = \underset{(\xbf,y) \sim \mathcal{D}}{\mathbb{P}}  \big( M_{\rho} (\xbf,y) \le 0 \big) \\
	& = \underset{(\xbf,y) \sim \mathcal{D}}{\mathbb{P}} \bigg ( -M_{\rho} (\xbf,y) + \E{(\xbf,y) \sim \mathcal{D}} M_{\rho} (\xbf,y) \ge \E{(\xbf,y) \sim \mathcal{D}} M_{\rho} (\xbf,y) \bigg) \\
	& \le \frac{\underset{(\xbf,y) \sim \mathcal{D}}{\textbf{Var}} (M_{\rho} (\xbf,y))}{\underset{(\xbf,y) \sim \mathcal{D}}{\textbf{Var}} (M_{\rho} (\xbf,y)) + \bigg( \E{(\xbf,y) \sim \mathcal{D}} M_{\rho} (\xbf,y) \bigg)^2} \\
	& = \frac{\text{Var} (M_{\rho}^{\mathcal{D}})}{\mu_2(M_{\rho}^{\mathcal{D}}) - \bigg(\mu_1(M_{\rho}^{\mathcal{D}})\bigg)^2 + \bigg(\mu_1(M_{\rho}^{\mathcal{D}})\bigg)^2}\\
	& = \frac{\text{Var} (M_{\rho}^{\mathcal{D}})}{\mu_2(M_{\rho}^{\mathcal{D}})} \\
	& = \frac{\mu_2(M_{\rho}^{\mathcal{D}}) - \bigg(\mu_1(M_{\rho}^{\mathcal{D}})\bigg)^2}{\mu_2(M_{\rho}^{\mathcal{D}})} \\
	& = 1 - \frac{\bigg(\mu_1(M_{\rho}^{\mathcal{D}})\bigg)^2}{\mu_2(M_{\rho}^{\mathcal{D}})} \\
	& = 1 - \frac{\bigg(1-2\,R_{\mathcal{D}}(G_{\hyperposterior}) \bigg)^2}{1-2\,\dmv}
	\end{align*}
\end{proof}

\section{Proof of Lemma 2}
\label{proof:change}
 We have
	\begin{align*}
	\nonumber  \E{ v \sim \hyperposterior} \,  \E{ h \sim \posterior_v} \phi (h)\ &=\ \E{ v \sim \hyperposterior} \  \E{ h \sim \posterior_v} \ln e^{\phi (h)} \\
	\nonumber   &=\ \E{ v \sim \hyperposterior} \  \E{ h \sim \posterior_v} \ln \bigg( \frac{\posterior_v(h)}{\prior_v(h)} \frac{\prior_v(h)}{\posterior_v(h)} e^{\phi (h)} \bigg)  \\
	\nonumber &=\ \E{ v \sim \hyperposterior}\  \bigg[ \E{ h \sim \posterior_v} \ln \bigg( \frac{\posterior_v(h)}{\prior_v(h)} \bigg) +  \E{ h \sim \posterior_v} \ln \bigg( \frac{\prior_v(h)}{\posterior_v(h)} e^{\phi (h)} \bigg) \bigg].
	\end{align*}
	According to the Kullback-Leibler definition, we have
	\begin{align*}
	\nonumber \E{ v \sim \hyperposterior} \,  \E{ h \sim \posterior_v} \phi (h) \ =\   \E{ v \sim \hyperposterior} \bigg[  \KL(\posterior_v \| \prior_v) +     \E{ h \sim \posterior_v}   \ln  \bigg( 
	\frac{\prior_v(h)}{\posterior_v(h)} e^{\phi (h)}  \bigg)  
	\bigg].
	\end{align*}
	By applying Jensen's inequality (Theorem~\ref{theo:jensen}, in Appendix) on the concave function $\ln$, we have
	\begin{align*}
	\nonumber  \E{ v \sim \hyperposterior} \,  \E{ h \sim \posterior_v} \phi (h)  \ &\le \ \E{ v \sim \hyperposterior} \  \bigg[ \KL(\posterior_v \| \prior_v) +   \ln \bigg( \E{ h \sim \prior_v} e^{\phi (h)} \bigg) \bigg] \\
	\nonumber  &= \ \E{ v \sim \hyperposterior} \KL(\posterior_v \| \prior_v) + \E{ v \sim \hyperposterior} \ln \bigg( \frac{\hyperposterior(v)}{\hyperprior(v)} \frac{\hyperprior(v)}{\hyperposterior(v)} \E{ h \sim \prior_v} e^{\phi (h)} \bigg) \\
	\nonumber  &= \ \E{ v \sim \hyperposterior}  \KL(\posterior_v \| \prior_v) + \KL(\hyperposterior \| \hyperprior) 
	+    \E{ v \sim \hyperposterior}    \ln  \bigg( \frac{\hyperprior(v)}{\hyperposterior(v)}  \E{ h \sim \prior_v} e^{\phi (h)}  \bigg).
	\end{align*}
	Finally, we apply again the Jensen inequality (Theorem~\ref{theo:jensen}) on $\ln$ to obtain the lemma.

\section{A Catoni-Like Theorem---Proof of {Corollary 1}}
	\label{proof:cor_catoni}
The result comes from Theorem \ref{theo:MVPB1} by taking $D(a,b)\!=\!\mathcal{F}(b)\!-\!C a$, for a convex $\cal F$ and $C\!>\!0$, and by upper-bounding $\E{ S \sim (\D)^n}  \E{ v \sim \hyperprior} \E{ h \sim \prior_v} e^{ n D(R_S(h) , R_\D(h))}$. We consider $R_S(h)$ as a random variable following a binomial distribution of $n$ trials with a probability of success $R(h)$. We have:
	{
		\begin{align*}
		& \E{ S \sim (\D)^n} \,  \E{ v \sim \hyperprior}\, \E{ h \sim \prior_v} e^{ n\, D(R_S(h) , R_\D(h))} \\
		& = \!\E{ S \sim (\D)^n} \,  \E{ v \sim \hyperprior} \,\E{ h \sim \prior_v} e^{ n\, \mathcal{F} (R_\D(h)- C\,n\, R_S(h))} \\
		& {\small = \!\!\E{ S \sim (\D)^n}   \E{ v \sim \hyperprior} \E{ h \sim \prior_v} \!\!\!e^{ n\, \mathcal{F} (R_\D(h))} \sum_{k=0}^n \underset{S \sim (\D)^n}{\Pr}\!\!\left(\!R_S(h)\!=\!\frac{k}{n}\!\right)\!e^{\!-Ck}} \\
		& {\small  =\! \!\!\!\!\E{ S \sim (\D)^{\!n}}   \E{ v \sim \hyperprior}\, \E{ h \sim \prior_v}\!\!\! e^{ n \mathcal{F}\! (R_\D(h))\!} \sum_{k=0}^n\! {\textstyle \binom{n}{k}} R_\D(h)^k (1\!-\!R_\D(h))^{n\!-\!k\!} e^{-Ck} }\\
		&{\small  =\!\! \E{ S \sim (\D)^n} \,  \E{ v \sim \hyperprior} \,\E{ h \sim \prior_v}\! e^{ n \mathcal{F} (R_\D(h))} \big(R_\D(h)\,e^{-\!\,C}\!\!+\!(1\!-\!R_\D(h))\big)^n\!\!.}
		\end{align*}
	}%
	The corollary is obtained with $\mathcal{F}(p)\!=\!\ln \frac{1}{(1\!-\!p[1\!-\!e^{-C}])}$.

%\section*{References}

\bibliography{biblio}

\end{document}